\providecommand{\tabularnewline}{\\}
\begin{document}
\title{Tractable Fragments of Temporal Sequences of Topological Information}
\author{Quentin Cohen-Solal}
\institute{LAMSADE, Universit\'e Paris-Dauphine, PSL, CNRS, France\\
\texttt{\textbf{quentin.cohen-solal@dauphine.psl.eu}}}

\maketitle
\global\long\def\R{\mathfrak{R}}%
\global\long\def\I{\mathtt{I}}%
\global\long\def\NP{\mathrm{NP}}%
\global\long\def\hier{\mathfrak{h}}%
\global\long\def\E{\mathtt{E}}%
\global\long\def\S{\mathcal{S}}%
\global\long\def\ca{\mathfrak{C}}%
\global\long\def\U{\mathcal{U}}%
\global\long\def\T{\mathcal{T}}%
\global\long\def\TS{\mathfrak{T}}%
\global\long\def\P{\mathcal{P}}%
\global\long\def\Nrcc{\mathcal{N}}%
\global\long\def\NPrcc{\mathcal{NP}_{8}}%
\global\long\def\Prcc{\mathcal{P}_{8}}%
\global\long\def\A{\mathcal{A}}%
\global\long\def\F{\mathcal{F}}%
\global\long\def\B{\mathcal{B}}%
\global\long\def\Base{\boldsymbol{\mathcal{B}}}%
\global\long\def\cbase{\widehat{\Base}}%
\global\long\def\a#1{\mathrm{a}_{#1}}%
\global\long\def\PA{\mathrm{PA}}%
\global\long\def\IA{\mathrm{IA}}%
\global\long\def\HH{\mathcal{H}_{8}}%
\global\long\def\CH{\mathcal{C}_{8}}%
\global\long\def\QH{\mathcal{Q}_{8}}%
\global\long\def\ERA{\mathrm{ERA}}%
\global\long\def\RCCH{\mathrm{RCC}_{8}}%

\global\long\def\RCCAH{\mathrm{RCA}_{8}}%
\global\long\def\TRCCHt{\mathrm{T\RCCH^{s}}}%
\global\long\def\TRCCHv{\mathrm{T\RCCH^{n}}}%
\global\long\def\TRCCHd{\mathrm{T\RCCH^{d}}}%
\global\long\def\RCCHc{\mathcal{C}_{\RCCH}}%
\global\long\def\RCCHmax{\mathcal{S}_{\RCCH}}%
\global\long\def\ERAc{\mathcal{C}_{\ERA}}%
\global\long\def\ERAmax{\mathcal{S}_{\ERA}}%
\global\long\def\PAc{\mathcal{C}_{\PA}}%
\global\long\def\PAmax{\mathcal{S}_{\PA}}%
\global\long\def\IAc{\mathcal{C}_{\IA}}%
\global\long\def\IAmax{\mathcal{S}_{\IA}}%
\global\long\def\pp{{\scriptstyle \bigtriangleup}}%
\global\long\def\ppi{{\scriptstyle \bigtriangledown}}%
\global\long\def\tp{\overset{{\scriptstyle \bigtriangleup}}{{\scriptstyle \bigtriangledown}}}%
\global\long\def\rb{b}%
\global\long\def\rpb{\cdot b}%
\global\long\def\rbp{b\cdot}%
\global\long\def\rpbp{<}%
\global\long\def\rs{s}%
\global\long\def\rps{\cdot s}%
\global\long\def\rf{f}%
\global\long\def\rpf{\cdot f}%
\global\long\def\ro{o}%
\global\long\def\rm{m}%
\global\long\def\rd{d}%
\global\long\def\rpd{\cdot d}%
\global\long\def\re{e}%
\global\long\def\rpe{=}%
\global\long\def\rdi{\bar{d}}%
\global\long\def\rpdi{\cdot\bar{d}}%
\global\long\def\roi{\bar{o}}%
\global\long\def\rmi{\bar{m}}%
\global\long\def\rsi{\bar{s}}%
\global\long\def\rpsi{\cdot\bar{s}}%
\global\long\def\rfi{\bar{f}}%
\global\long\def\rpfi{\cdot\bar{f}}%
\global\long\def\rbi{\bar{b}}%
\global\long\def\rpbi{\cdot\bar{b}}%
\global\long\def\rbip{\bar{b}\cdot}%
\global\long\def\rpbip{>}%
\global\long\def\conv{\mathop{\Rsh}}%
\global\long\def\upconv{\mathop{\uparrow}}%
\global\long\def\downconv{\mathop{\downarrow}}%
\global\long\def\convde#1{{_{#1}\Rsh}}%
\global\long\def\c{\mathrm{C}}%
\global\long\def\dr{\mathrm{DR}}%
\global\long\def\pp{\mathrm{PP}}%
\global\long\def\o{\mathrm{O}}%
\global\long\def\p{\mathrm{P}}%
\global\long\def\dc{\mathrm{DC}}%
\global\long\def\ec{\mathrm{EC}}%
\global\long\def\po{\mathrm{PO}}%
\global\long\def\tpp{\mathrm{TPP}}%
\global\long\def\ntpp{\mathrm{NTPP}}%
\global\long\def\tppi{\mathrm{\overline{TPP}}}%
\global\long\def\ppi{\mathrm{\overline{PP}}}%
\global\long\def\ntppi{\mathrm{\overline{NTPP}}}%
\global\long\def\eq{\mathrm{EQ}}%
\global\long\def\convinv{\{\Rsh^{-1}\}}%
\global\long\def\inv#1{\overline{#1}}%
\global\long\def\comp{\circ}%
\global\long\def\vide{\varnothing}%
\global\long\def\contrainte#1#2#3{#1\mathrel{#2}#3}%
\global\long\def\and{\thinspace\wedge\thinspace}%
\global\long\def\ou{\thinspace\vee\thinspace}%
\global\long\def\liste#1#2{\left\{  #1\,|\,#2\right\}  }%
\global\long\def\TRCCHvcp{\mathrm{T\RCCH^{n}}[\eq\nleftrightarrow\ntpp]}%
\global\long\def\TRCCHdcp{\mathrm{T\RCCH^{d}}[\eq\nleftarrow\ntpp]}%
\global\long\def\Sntpp{\S^{{\scriptscriptstyle \ntpp\Rightarrow\tpp}}}%
\global\long\def\Sntpppo{\S^{{\scriptscriptstyle \ntpp\Rightarrow\po}}}%
\global\long\def\Qntpp{\mathcal{Q}_{8}^{{\scriptscriptstyle \ntpp\Rightarrow\tpp}}}%
\global\long\def\Qntpppo{\mathcal{Q}_{8}^{{\scriptscriptstyle \ntpp\Rightarrow\po}}}%
\global\long\def\Hntpp{\HH^{{\scriptscriptstyle \ntpp\Rightarrow\tpp}}}%
\global\long\def\Hntpppo{\HH^{{\scriptscriptstyle \ntpp\Rightarrow\po}}}%
\global\long\def\vois{\mathop{\updownarrow}}%
\global\long\def\implique{\,\implies\,}%
\global\long\def\cquatre#1#2#3#4{(#1,#2,#3,#4)}%
\global\long\def\cdeux#1#2{\left(#1,#2,\right)}%
\global\long\def\ctrois#1#2#3{\left(#1,#2,#3\right)}%
\global\long\def\liste#1#2{\left\{  #1\,|\,#2\right\}  }%

\begin{abstract}
In this paper, we focus on qualitative temporal sequences of topological
information. We firstly consider the context of topological temporal
sequences of length greater than $3$ describing the evolution of
regions at consecutive time points. We show that there is no Cartesian
subclass containing all the basic relations and the universal relation
for which the algebraic closure decides satisfiability. However, we
identify some tractable subclasses, by giving up the relations containing
the non-tangential proper part relation and not containing the tangential
proper part relation.

We then formalize an alternative semantics for temporal sequences.
We place ourselves in the context of the topological temporal sequences
describing the evolution of regions on a partition of time (i.e. an
alternation of instants and intervals). In this context, we identify
large tractable fragments. 

\keywords{Qualitative Spatio-temporal Reasoning \and Satisfiability
Decision.} 
\end{abstract}

\section{Introduction}

The reasoning on temporal and spatial qualitative information is necessary
to solve many problems that are found in the context of planning,
simulation, robotics, intelligent environments and human-computer
interaction~\cite{landsiedel2017review,cohn1998exploiting,westphal2011guiding,dylla2004exploiting,mansouri2016robot,sioutis2017towards}.
For this reason, many \emph{spatio-temporal formalisms} have been
proposed~\cite{gerevini2002qualitative,ragni2006temporalizing,westphal2013transition,sioutis2014qualitative,sioutis2015ordering,sioutis2015generalized,cohen2017temporal,ligozat2013qualitative,chen2015survey,dylla2017survey}.
Spatio-temporal formalisms generally decompose into a spatial formalism
and a temporal formalism. The point algebra is a formalism describing
the relative positions of points on a line (the timeline or a line
of space). $\RCCH$ is another formalism, more expressive than the
point algebra, expressing the \emph{topological relations} between
regions. It expresses the notions of contact and inclusion.

The \emph{qualitative temporal sequences} \cite{westphal2013transition,cohen2017temporal}
are the simplest spatio-temporal descriptions, in the sense that there
is no uncertainty about temporal information. However, strong negative
results have been identified for one of the simplest spatial formalisms:
deciding the satisfiability of a temporal sequence over the point
algebra is $\NP$-complete (even while restricting the language to
\emph{basic relations} and \emph{the universal relation})~\cite{westphal2013transition}.
One can then wonder whether deciding satisfiability is necessarily
$\NP$-hard within the framework of spatio-temporal formalisms. However,
the complexity of fragments of $\RCCH$ has not been studied within
the context of temporal sequences. There could be fragments, not expressing
the point algebra, which are tractable.

We therefore study in this paper the complexity of deciding the satisfiability
of the topological temporal sequences. We identify in particular a
negative result: the classical procedure to decide the satisfiability
of polynomial fragments, the \emph{algebraic closure}, does not decide
the satisfiability in this context (even while being limited to the
basic relations and to the universal relation, if the length of the
sequence is greater than $3$). We also identify a positive result,
by considering semantics different from the classical semantics of
temporal sequences. More precisely, we no longer consider that temporal
sequences describe the evolutions of entities at neighboring instants.
We consider instead that they describe the evolutions of entities
on a partition of time (i.e. on an alternation of instants and intervals).
In the context of this semantics, we identify large tractable fragments.

In the next section, we present related work, the $\RCCH$ formalism
and temporal sequences over $\RCCH$. In Section~\ref{sec:-temporaliser-sans-relations},
we introduce our negative result and we identify tractable fragments
that do not contain all the basic relations. Finally, in Section~\ref{sec:-temporaliser-sur-partition},
we formalize the alternative semantics of temporal sequences, present
the new reasoning operators, and then identify the large tractable
fragments.

Note, this paper is an extended version of \cite{cohen2020tractable}.

\section{Background}

\subsection{Related Work}

Many works deal with spatio-temporal reasoning and its complexity.
Tractable fragments have been identified in the context of topological
temporal sequences describing the evolution of \emph{constant-size}
regions at non-\emph{neighboring} instants (regions can satisfy any
relations between the instants)~\cite{cohen2017temporal}. The temporal
sequences that we consider, like those of the $\NP$-completeness
result of the point algebra, describe regions at neighboring instants.
\emph{Temporal sequence ordering} (at neighboring instants) is an
$\NP$-complete problem for several fundamental formalisms, such as
$\RCCH$~\cite{sioutis2015ordering}. Formalisms with a higher temporal
expressivity have also been proposed. For example, $\RCCH$ has been
combined with \emph{Allen's interval algebra}~\cite{gerevini2002qualitative}.
The cardinal direction calculus has also been combined with the Allen's
interval algebra~\cite{ragni2006temporalizing}.

In general, a qualitative spatio-temporal formalism is based on a
transition graph, i.e. a graph representing the possible evolutions
of basic relations. It can be a \emph{neighbourhood graph}~\cite{freksa1991conceptual}
or a \emph{dominance graph}~\cite{galton2001dominance,galton2000qualitative}.
In a neighbourhood graph, two relations $b$, $b'$ are \emph{neighbour}
(i.e. adjacent in the graph), if there exists a pair of evolving entities
$(e,e')$ satisfying $b$ at an instant $t$ and $b'$ at an instant
$t'$, and satisfying $b$ or $b'$ between $t$ and $t'$. In a dominance
graph, a relation $b$ dominates another relation $b'$ (i.e. there
is an arc from $b'$ to $b$ in the graph), denoted by $b\vdash b'$,
if there exists $t,t'\in\mathbb{R}$ and a pair of evolving entities
$(e,e')$ satisfying $b$ at $t$ and satisfying $b'$ at each instant
of $]t,t']$. Many transition graphs have been determined~\cite{santos2009conceptual,wu2014towards,van2005conceptual,ligozat1994towards,freksa1991conceptual,egenhofer2010family,egenhofer2015qualitative,dylla2007qualitative,zimmermann1993enhancing,freksa1992temporal,cohn2001qualitative,reis2008conceptual,ragni2008reasoning,kurata20079+,reis2008conceptual,ragni2006temporalizing}.

Spatio-temporal qualitative reasoning is also studied in the context
of logics (see \cite{hazarika2001qualitative,hazarika2005qualitative,bennett2002multi,muller2002topological,sioutis2015generalized,galton1993towards,bennett2000describing,wolter2000spatio,wolter2002qualitative,gabelaia2003computational,burrieza2005multimodal,burrieza2011pdl}).
Deciding the satisfiability of these logics is generally PSPACE-hard.
Ontologies of time based on points or/and intervals have been studied~\cite{van2013logic}.
There is, in particular, the Event Calculus, a logic of action and
change, which can express properties at instant and interval \cite{kowalski1989logic}.

\subsection{Region Connection Calculus $\protect\RCCH$}

$\RCCH$ \cite{randell1992spatial,li2003region,ligozat2013qualitative}
is a classical qualitative formalism~\cite{ligozat2013qualitative,chen2015survey,dylla2017survey}.
Thus, it is a triplet $\left(\A,\U,\varphi\right)$ where $\A$ is
a set of relations forming a finite \emph{non-associative binary relation
algebra}, $\U$ is the \emph{universe}, i.e. the set of considered
entities, and $\varphi$ is a particular interpretation function associating
with each relation of $\A$ a relation over $\U$. We denote by $\RCCAH$
the algebra of $\RCCH$. The universe $\U$ of $\RCCH$ is the set
of regions of a certain topological space $\TS$ (i.e. the non-empty,
\emph{closed} and \emph{regular} subsets of $\TS$). $\TS$ is generally
$\mathbb{R}^{n}$. Any algebra $\A$ has special relations, called
\emph{basic relations}. Every relation of $\A$ is a union of basic
relations. The $8$ basic relations of $\RCCH$: $\dc$ (disconnected),
$\ec$ (externally connected), $\po$ (partially overlapping), $\eq$
(equal), $\tpp$ (tangential proper part), $\ntpp$ (non-tangential
proper part), and the converse of the two previous relations are described
in Figure~\ref{fig:RCC8_relations_horizontal} and defined in Table~\ref{tab:def-relations-rcc8}.
We denote by $\Base_{\RCCH}$ the set of the basic relations of $\RCCH$.
We denote by $\B_{\RCCH}$ the \emph{universal relation} (i.e. the
union of all relations) and by $\vide$ the empty relation. Any algebra
$\A$ has several operators: the \emph{union} $\cup$, the \emph{intersection}
$\cap$, the \emph{converse} $\inv{\cdot}$, and the \emph{(abstract)
composition} $\comp$. These operators are used to infer new relations:
$\contrainte xry\implies\contrainte y{\inv r}x$, $\contrainte xry\and\contrainte x{r'}y\implies\contrainte x{\left(r\cap r'\right)}y$,
and $\contrainte xry\and\contrainte y{r'}z\implies\contrainte x{\left(r\comp r'\right)}z$
(with $r,r'\in\A$ and $x,y,z$ being entity variables). The abstract
composition $\comp$ of $\RCCH$ is the \emph{weak composition}: $r\comp r'=\bigcup\liste{b\in\Base_{\RCCH}}{\varphi\left(b\right)\cap\left(\varphi\left(r\right)\comp\varphi\left(r'\right)\right)\neq\vide}$
with $r,r'\in\A$. For example, the composition of relations $\tpp\cup\eq$
and $\tpp$ is the relation $\tpp$. The composition of basic relations
is described in a so-called \emph{composition table}~\cite{li2003region}.

\begin{table}
\begin{centering}
\begin{tabular}{|c|c|}
\hline 
Relation & Definition\tabularnewline
\hline 
\hline 
$\contrainte x{\dc}y$ & $\lnot\left(\contrainte x{\c}y\right)$\tabularnewline
\hline 
$\contrainte x{\p}y$ & $\forall z\quad\contrainte z{\c}x\implies\contrainte z{\c}y$\tabularnewline
\hline 
$\contrainte x{\pp}y$ & $\contrainte x{\p}y\and\lnot\left(\contrainte y{\p}x\right)$\tabularnewline
\hline 
$\contrainte x{\eq}y$ & $\contrainte x{\p}y\and\contrainte y{\p}x$\tabularnewline
\hline 
$\contrainte x{\o}y$ & $\exists z\quad\contrainte z{\p}x\and\contrainte z{\p}y$\tabularnewline
\hline 
$\contrainte x{\po}y$ & $\contrainte x{\o}y\and\lnot\left(\contrainte x{\p}y\right)\and\lnot\left(\contrainte y{\p}x\right)$\tabularnewline
\hline 
$\contrainte x{\ec}y$ & $\contrainte x{\c}{y\and\lnot\left(\contrainte x{\o}y\right)}$\tabularnewline
\hline 
$\contrainte x{\tpp}y$ & $\contrainte x{\pp}{y\and\left(\exists z\ \contrainte z{\ec}x\and\contrainte z{\ec}y\right)}$\tabularnewline
\hline 
$\contrainte x{\ntpp}y$ & $\contrainte x{\pp}{y\and\lnot\left(\exists z\ \contrainte z{\ec}x\and\contrainte z{\ec}y\right)}$\tabularnewline
\hline 
$\contrainte x{\tppi}y$ & $\contrainte y{\tpp}x$\tabularnewline
\hline 
$\contrainte x{\ntppi}y$ & $\contrainte y{\ntpp}x$\tabularnewline
\hline 
\end{tabular}
\par\end{centering}
\caption{Definitions of RCC relations ($C(x,y)$ is the \emph{contact relation},
it means that the closedregions $x$ and $y$ intersect ; $P$ is
the \emph{part} \emph{relation} ; $\protect\o$ the \emph{overlap
relation} ; variables $x,y,z$ are closed regions).\label{tab:def-relations-rcc8}}
\end{table}

Generally, a description based on $\RCCH$ is a qualitative constraint
network, i.e. a conjunction of relations between different entities.
For instance, $\contrainte x{\dc\cup\ec}y\and\contrainte z{\tpp\cup\ntpp\cup\eq}y$
is such a description, which means that the interiors of regions $x$
and $y$ are disjoint and that the region $z$ is included in $y$.
Deciding the satisfiability of qualitative constraint networks whose
relations belong to $\RCCAH$ is an $\NP$-complete problem~\cite{renz1999maximal}.
Tractable fragments have been identified. They consist in restricting
the relations of constraint networks to a particular subset of $\RCCAH$.
Three large tractable subsets containing all the basic relations and
the universal relation have been identified: $\HH$, $\QH$, and $\CH$~\cite{renz1999maximal}.
They are moreover maximal for tractability. They are defined in Table~\ref{tab:Definitions-de-Q8,H8,C8}.
On these subclasses, applying the \emph{algebraic closure}, which
is a reasoning operator on networks using the algebra operators, decides
satisfiability.

\begin{table*}
\begin{centering}
\begin{tabular}{|c|c|}
\hline 
 & Definition\tabularnewline
\hline 
\hline 
$\Nrcc$ & $\liste{r\in\RCCAH}{\po\nsubseteq r\and r\cap\left(\tpp\cup\ntpp\right)\neq\vide\and r\cap\left(\tppi\cup\ntppi\right)\neq\vide}$\tabularnewline
\hline 
$\NPrcc$ & $\Nrcc\cup\liste{r_{1}\cup\ec\cup r_{2}\cup\eq}{r_{1}\in\left\{ \vide,\dc\right\} \and r_{2}\in\left\{ \ntpp,\ntppi\right\} }$\tabularnewline
\hline 
$\Prcc$ & $\RCCAH\backslash\NPrcc$\tabularnewline
\hline 
$\HH$ & $\Prcc\cap\liste{r\in\RCCAH}{\ntpp\cup\eq\subseteq r\implies\tpp\subseteq r\and\ntppi\cup\eq\subseteq r\implies\tppi\subseteq r}$\tabularnewline
\hline 
$\QH$ & $\Prcc\cap\liste{r\in\RCCAH}{\left(\eq\subseteq r\and r\cap\left(\tpp\cup\ntpp\cup\tppi\cup\ntppi\right)\neq\vide\right)\implies\po\subseteq r}$\tabularnewline
\hline 
$\CH$ & $\Prcc\cap\liste{r\in\RCCAH}{\left(\ec\subseteq r\and r\cap\left(\tpp\cup\ntpp\cup\tppi\cup\ntppi\cup\eq\right)\neq\vide\right)\implies\po\subseteq r}$\tabularnewline
\hline 
\end{tabular}
\par\end{centering}
\caption{\label{tab:Definitions-de-Q8,H8,C8}Definitions of the relations sets
$\protect\HH$, $\protect\QH$, and $\protect\CH$.}
\end{table*}

\begin{figure}[t]
\begin{centering}
\includegraphics[scale=0.22]{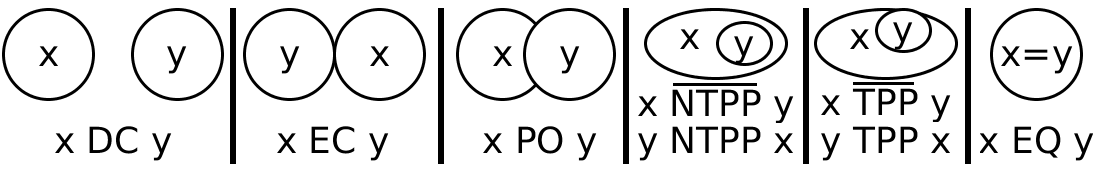}
\par\end{centering}
\centering{}\caption{The $8$ basic relations of $\protect\RCCH$ in the plane.}
\label{fig:RCC8_relations_horizontal}
\end{figure}

\subsection{Link with finite CSP}

We briefly discuss the links between qualitative formalisms and finite
CSP. On the one hand, for some qualitative formalisms, the algebraic
closure enforces path-consistency \cite{renz2005weak}. On the other
hand, a qualitative constraints network can be translated into a network
of finite quantitative constraints \cite{westphal2009qualitative}.
The CSP variables are the relations between the qualitative variables.
More precisely, there is a CSP variable $v_{xy}$ for each pair of
qualitative variables $(x,y)$. The set of possible values for the
CSP variable $v_{xy}$ is the set of basic relations contained in
the relation between $x$ and $y$. The CSP constraints between the
CSP variables encode the composition operator. These constraints are
ternary and of the form $\liste{(b'',b,b')\in\Base^{3}}{b''\subseteq b\comp b'}$.

\subsection{Semantics of continuously evolving regions\label{subsec:Semantics-of-continuously}}

Before presenting temporal sequences over $\RCCH$, we must formally
define what we call a region evolving continuously over time. A region
evolving continuously during a time interval $I$ (i.e. a real closed
interval) is naturally defined as a continuous function $f$ from
$I$ to the set of considered regions $\R$ of a topological space
(for instance, $\R$ can be the regions of $\mathbb{R}^{n}$ with
$n\geq1$ and can possibly be restricted to convex or connected regions).
However, this standard mathematical definition requires that $\R$
be associated with a topology. Thus, we require the following concept:
\begin{definition}
A\emph{ topological region space} $(\R,T)$ is a set of regions $\R$
of a topological space associated with a topology $T$ (i.e. $\left(\R,T\right)$
is also a topological space).
\end{definition}

There are several possible topologies for the regions of $\mathbb{R}^{n}$~\cite{galton2000qualitative,davis2001continuous}.
In particular, choosing a metric between regions amounts to choosing
a topology. Depending on the choice of the topological region space,
the evolution of regions satisfies or violates certain properties,
such as continuity of particular functions (area, distance, union,
projection, convex hull, ...)~\cite{davis2001continuous}. In fact,
solids, gases, shadows, ... do not evolve continuously in the same
way~\cite{galton2000qualitative}. The usual metric of the regions
of $\mathbb{R}^{n}$ is the \emph{Hausdorff distance}. Unfortunately,
the corresponding evolution of the relations of regions is not \emph{compatible}
with the classical neighbourhood graph of $\RCCH$ (Figure~\ref{fig:Graphe-de-voisinage}.a)~\cite{davis2001continuous}.
The \emph{dual-Hausdorff distance}~\cite{davis2001continuous} corrects
this problem: the evolution of regions according to this metric is
compatible with the classical neighbourhood graph of $\RCCH$. Note
that other metrics also correct it.

\begin{figure}
\begin{centering}
\includegraphics[width=0.8\columnwidth]{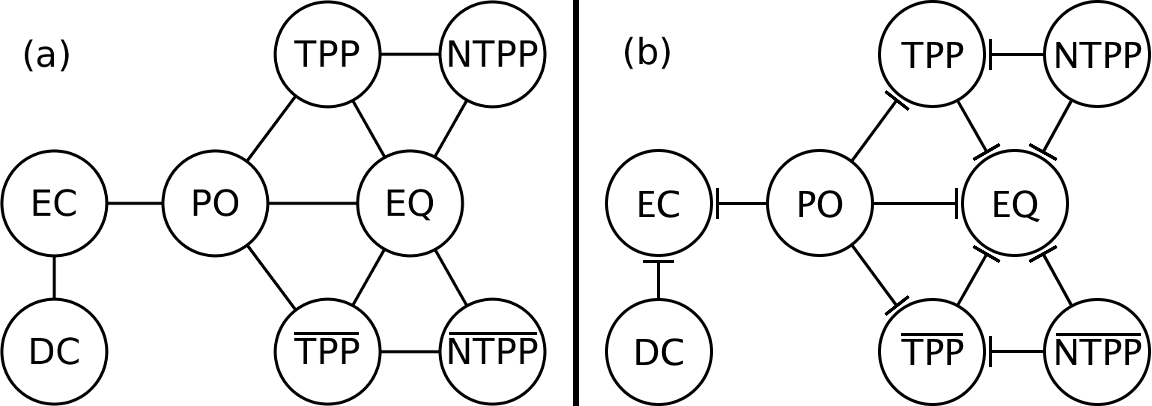}
\par\end{centering}
\caption{\label{fig:Graphe-de-voisinage}Neighbourhood graph of $\protect\RCCH$
(a) and dominance graph of $\protect\RCCH$ (b).}
\end{figure}

\subsection{\label{subsec:Temporalisations-de-RCC8}Topological Sequences at
Neighboring Instants}

We present in this section the topological temporal sequences describing
the continuous evolution of regions at neighboring instants~\cite{cohen2017temporal},
that we denote $\TRCCHv$. For this, we recall the basics of the framework
of \emph{multi-algebras}~\cite{cohen2017checking,cohen2017temporal}
from which it is defined. It is an abstract framework that includes
several extensions of classical qualitative formalisms, such as temporal
sequences.

\subsubsection{Projections and Relations}

Multi-algebras generalize non-associative binary relation algebras.
\emph{A multi-algebra} is a Cartesian product $\A=\A_{1}\times\cdots\times\A_{m}$
of relation algebras satisfying certain properties. We denote by $\I$
the index set of the multi-algebra, i.e. $\left\{ 1,\ldots,m\right\} $.
In the context of temporal sequences, each algebra $\A_{i}$ corresponds
to the same relation algebra but to a different time period. The \emph{set
of basic relations} of $\A$, denoted $\Base$, is $\Base_{1}\times\cdots\times\Base_{m}$
where $\Base_{i}$ is the set of basic relations of $\A_{i}$. Multi-algebras
are equipped with a set of additional operators $\conv_{i}^{j}$ from
$\A_{i}$ to $\A_{j}$ for all distinct $i,j\in\I$, called \emph{projections}.
Any projection $\conv$ satisfies by definition $\conv\left(r\cup r'\right)=\left(\conv r\right)\cup\left(\conv r'\right)$
and $\conv\left(\inv r\right)=\inv{\conv\left(r\right)}$. In the
context of temporal sequences, projections describe the possible evolution
of relations over time. 
\begin{table}
\begin{centering}
\begin{tabular}{|c|c|}
\hline 
$b$ & $\vois b$\tabularnewline
\hline 
\hline 
$\dc$ & $\dc\cup\ec$\tabularnewline
\hline 
$\ec$ & $\dc\cup\ec\cup\po$\tabularnewline
\hline 
$\po$ & $\ec\cup\po\cup\tpp\cup\tppi\cup\eq$\tabularnewline
\hline 
$\tpp$ & $\po\cup\tpp\cup\ntpp\cup\eq$\tabularnewline
\hline 
$\ntpp$ & $\tpp\cup\ntpp\cup\eq$\tabularnewline
\hline 
$\eq$ & $\po\cup\tpp\cup\ntpp\cup\tppi\cup\ntppi\cup\eq$\tabularnewline
\hline 
\end{tabular}
\par\end{centering}
\caption{Neighboring relations of the basic relations of $\protect\RCCH$.\label{tab:Correspondances-des-relations}}
\end{table}

\begin{definition}
The operator $\vois$ from $\RCCAH$ to $\RCCAH$ is the projection
satisfying the Table~\ref{tab:Correspondances-des-relations}.
\end{definition}

The projection $\vois$ encodes the neighbourhood graph of $\RCCH$
described in Figure~\ref{fig:Graphe-de-voisinage}.a (i.e. $b\subseteq\vois b'$
if and only if $b$ and $b'$ are neighbours). For instance, $\po\nsubseteq\vois\dc$
because it is not possible to have a continuous transition from $\dc$
to $\po$. A \emph{relation} of a multi-algebra is an $m$-tuplet
of classical relations. By adding semantics, that is to say a universe
$\U$ and a specific interpretation function $\varphi$, we get a
qualitative formalism said \emph{loosely combined}, also called \emph{sequential
formalism}. 
\begin{example}
To illustrate the preceding concepts and to give intuition concerning
$\TRCCHv$, we give some examples ($\TRCCHv$ will be formalized in
the next subsection). The Cartesian product of the multi-algebra $\A$
of $\TRCCHv$ is $\RCCAH^{m}$ where $m$ is the length of considered
temporal sequences (the sequences describe regions at instants $t_{1},\ldots,t_{m}$).
The component $i\in\I$ of a relation $R$ of $\TRCCHv$, $R_{i}$,
is the relation of $\RCCH$ which must be satisfied at the instant
$t_{i}$. An example of relations of $\TRCCHv$, with $m=3$, is $\left(\tpp\cup\ntpp\cup\tppi\cup\ntppi,\po\cup\eq,\ec\cup\dc\right)$.
This relation means on the one hand that one of the two regions is
first included in the other ($R_{1}$ is satisfied at $t_{1}$), then
they overlap or are equal ($R_{2}$ is satisfied at $t_{2}$) and
finally they are disjoint ($R_{3}$ is satisfied at $t_{3}$). Since
the instants of the sequence are \emph{neighbors}, this relation means,
on the other hand, that between $t_{1}$ and $t_{2}$ the regions
satisfy either the basic relation being satisfied at $t_{1}$ (which
is $\tpp$ or $\ntpp$ or $\tppi$ or $\ntppi$) or the one being
satisfied at $t_{2}$ (which is $\po$ or $\eq$) and between $t_{2}$
and $t_{3}$ they satisfy either the basic relation being satisfied
at $t_{2}$ (which is $\po$ or $\eq$) or the one being satisfied
at $t_{3}$ (which is $\ec$ or $\dc$). This additional constraint
is called \emph{continuity without intermediary relation} and also
\emph{continuous qualitative change~}\cite{cohen2017temporal,westphal2013transition}.
This constraint enforces that each sequence of relations $R$ describes
all changes of relations between regions. In other words, between
two instants $t_{i}$ and $t_{i+1}$, there must be no change of relation,
other than the transition from the basic relation satisfied at $t_{i}$
towards the basic relation satisfied at $t_{i+1}$. 
\end{example}

\subsubsection{Multi-algebra and Relation Operators}

We recall the definition of $\TRCCHv$ (originally denoted $\mathrm{TT_{wir}}$~\cite{cohen2017temporal}). 

\begin{definition}
\label{exa:La-topologie-temporaliser} Let $t_{1},\ldots,t_{m}\in\mathbb{R}$
be consecutive instants and $(\R,T)$ be a topological region space.
$\TRCCHv$ is the triplet $\left(\A,\U,\varphi\right)$ where:
\begin{itemize}
\item $\A$ is $\RCCAH^{m}$ equipped with the projections $\conv_{i}^{j}$
fully defined by $\conv_{i}^{j}b=\vois b$ if $\left|j-i\right|=1$
and $\conv_{i}^{j}b=\B_{\RCCH}$ otherwise, for all $b\in\Base_{\RCCH}$
and $i,j\in\I$, 
\item $\U$ is the set of continuous functions from $[t_{1},t_{m}]$ to
$\R$, and
\item $\varphi$ is the function from $\A$ to $2^{\U\times\U}$ such that
for all $R\in\A$, $\varphi\left(R\right)$ is the set of pairs of
functions $\left(f,f'\right)\in\U\times\U$ satisfying at each instant
$t_{i}$ the relation $R_{i}$ (i.e. $\forall i\in\I\ \left(f\left(t_{i}\right),f'\left(t_{i}\right)\right)\in\varphi_{\RCCH}\left(R_{i}\right)$)
and satisfying no intermediary relations between each instants $t_{i}$
and $t_{i+1}$ (i.e. during each $[t_{i},t_{i+1}]$ a basic relation
is satisfied, then another, formally: for all $i\in\I\backslash\left\{ m\right\} $
there exist $\tau\in[t_{i},t_{i+1}]$ and $b,b'\in\Base_{\RCCH}$
such that at each instant $t\in[t_{i},\tau[$, $\left(f\left(t\right),f'\left(t\right)\right)\in\varphi_{\RCCH}\left(b\right)$,
at each instant $t\in]\tau,t_{i+1}]$, $\left(f\left(t\right),f'\left(t\right)\right)\in\varphi_{\RCCH}\left(b'\right)$,
and that $\left(f\left(\tau\right),f'\left(\tau\right)\right)\in\varphi_{\RCCH}\left(b\cup b'\right)$),
with $\varphi_{\RCCH}$ the interpretation function of $\RCCH$.
\end{itemize}
\end{definition}

\begin{remark}
\label{sec:Discussion-sur-la} $\TRCCHv$ depends on a set of regions
but also on a topology for the regions (i.e. a notion of continuity).
Note that $\TRCCHv$ is not necessarily a sequential formalism (i.e.
its reasoning operators are not necessarily correct: reasoning operators
could remove some solutions). To be a sequential formalism, the evolution
of regions corresponding to the chosen topological region space must
be compatible with the classical neighbourhood graph of $\RCCH$ (see
Section~\ref{subsec:Semantics-of-continuously}). Thus, if $\R$
is $\mathbb{R}^{n}$ equipped with the dual-Hausdorff distance, $\TRCCHv$
is a sequential formalism.
\end{remark}

Every multi-algebra has operators on its relations, namely \emph{composition},
\emph{union}, \emph{intersection}, and \emph{converse}. They are defined
componentwise. For example, the composition of $R$ and $R'$, $R\comp R'$,
is defined by $\left(R\comp R'\right)_{i}=R_{i}\comp R_{i}'$ for
all $i\in\I$.

There is another operator on relations: the \emph{projection closure}
of a relation $R$, denoted $\conv\left(R\right)$. It consists in
sequentially applying the following operation until reaching a fixed
point: for all $j\in\I$, $R_{j}\leftarrow R_{j}\cap\bigcap_{i\neq j}\conv_{i}^{j}R_{i}$.
Projection closure refines relations by removing classical basic relations
that are impossible to satisfy. In the context of $\TRCCHv$, projection
closure enforces continuity without intermediary relation. For example,
the projection closure of the following relation, with $m=3$, $\left(\tpp\cup\ntpp\cup\tppi\cup\ntppi,\po\cup\eq,\ec\cup\dc\right)$
is $\left(\tpp\cup\tppi,\po,\ec\right)$. Indeed, in particular, there
is no transition from the relation $\po$ or from the relation $\eq$
to the relation $\dc$ without intermediary relation. In addition,
there is no transition from $\ntpp$ to $\dc$ or $\ec$ in just two
qualitative changes. Projection closure removes such impossibilities.
Relations closed under projection, i.e. satisfying $\conv\left(R\right)=R$,
are said \emph{$\conv$-closed}. Note that projection closure can
be seen as a kind of arc-consistency.

\subsubsection{Constraint Networks and Algebraic Closure}

A description in the context of multi-algebras is a \emph{(qualitative
constraint) network}. A network over a multi-algebra $\A$ is a set
of variables $\E$ and a function $N$ associating with each pair
of variables $(x,y)\in\E^{2}$ such that $x\neq y$ a relation of
$\A$ and satisfying $N(x,y)=\inv{N(y,x)}$ for all distinct $x,y\in\E$.
A sequence of classical constraint networks is thus represented by
a single constraint network whose relations are sequences of relations,
i.e. relations of a multi-algebra. We denote $N(x,y)$ more succinctly
by $N^{xy}$. It is sometimes useful to refer to the ``subnetwork''
corresponding to the index $i\in\I$ of a network $N$, denoted $N_{i}$,
called \emph{slice}. $N_{i}$ is defined by $\left(N_{i}\right)^{xy}=\left(N^{xy}\right)_{i}$
for all distinct $x,y\in\E$. In the context of temporal sequences,
the slice $i$ of a network $N$, $N_{i}$, describes the relations
of the sequence at the instant $t_{i}$. Similarly, the \emph{slice}
$i\in\I$ of a subset $\S\subseteq\A$, denoted $\S_{i}$, is $\liste{R_{i}}{R\in\S}$.
A network is said to be \emph{satisfiable} (or \emph{consistent})
if there is a \emph{solution} to this network, that is, an assignment
for the variables $\left\{ u_{x}\right\} _{x\in\E}\subseteq\U$ satisfying
the relations of the constraint network, i.e. $\left(u_{x},u_{y}\right)\in\varphi\left(N^{xy}\right)$.
A network $N$ is said \emph{over} a subset of relations $\S\subseteq\A$
if for all distinct $x,y\in\E$, $N^{xy}\in\S$. A \emph{scenario}
is a network over $\Base$.

The reasoning operator on networks is the \emph{algebraic closure},
which applies the operators of the multi-algebra. It propagates information
within the network, makes inferences, by refining relations. In the
context of topological temporal sequences, the algebraic closure propagates
information over regions at each instant and between the different
instants. A relation $R$ \emph{refines} a relation $R'$ if $R_{i}\subseteq R'_{i}$
for all $i\in\I$. More generally, $N$ \emph{refines} $N'$, denoted
$N\subseteq N'$, if for all distinct $x,y\in\E$, $N^{xy}\subseteq\left(N'\right)^{xy}$.
Algebraic closure closes networks under composition and under projection.
Algebraic closure thus applies the two following operations until
reaching a fixed point: $N^{xz}\leftarrow N^{xz}\cap\left(N^{xy}\comp N^{yz}\right)$
and $N^{xz}\leftarrow\conv\left(N^{xz}\right)$ for all distinct $x,y,z\in\E$.
We denotes by $\ca\left(N\right)$ the algebraic closure of $N$.
In the context of topological temporal sequences, the composition
operator makes inferences at every instant and the projection operator
makes inferences between the instants. A network $N$ is called \emph{algebraically
closed} if it is closed under composition, i.e. for all distinct $x,y,z\in\E$,
$N^{xz}\subseteq N^{xy}\comp N^{yz}$, and if each of its relations
$N^{xy}$ is closed under projection, i.e. for all distinct $i,j\in\I$,
$N_{j}^{xy}\subseteq\conv_{i}^{j}N_{i}^{xy}$. 

\subsubsection{\emph{Consistency and Satisfiability}}

A sequential formalism is said \emph{complete} if all its algebraically
closed scenarios are satisfiable. It is a fondamental property for
deciding satisfiability in an algebraic way. 
\begin{remark}
To know if $\TRCCHv$ is complete for the regions of $\mathbb{R}^{n}$
equipped with the dual-Hausdorff distance or for another topological
region space is a complex problem. In fact, perhaps there is no topological
region space such that $\TRCCHv$ is a complete sequential formalism.
For this reason, several studies have dealt with a ``weak satisfiability'',
i.e. satisfiability with a weaker notion of continuity~\cite{gerevini2002qualitative,westphal2013transition,sioutis2015ordering,bennett2002multi}.
Formally, a network $N$ over $\TRCCHv$ is \emph{weakly satisfiable}
if it contains an algebraically closed scenario (i.e. if there exists
a sequence of satisfiable classical scenarios satisfying the constraints
of the networks $N_{i}$ \emph{and} the neighbourhood graph). The
tractable subclasses that we identify in this article are tractable
for $\TRCCHv$ associated with a topological region space such that
$\TRCCHv$ is a complete sequential formalism. They are also tractable
for this notion of weak satisfiability.
\end{remark}

A relation $R$ is said \emph{trivially unsatisfiable} if there exists
$i\in\I$ such that $R_{i}=\vide$. Note that a relation which is
not trivially unsatisfiable can be \emph{unsatisfiable}, i.e. $\varphi\left(R\right)=\vide$.
This is the case of $\left(\po,\po,\dc\right)$. A relation is said
\emph{$\conv$-consistent} if it is $\conv$-closed and it is not
trivially unsatisfiable. A network is said \emph{trivially unsatisfiable
}if there exists distinct $x,y\in\E$ such that $N^{xy}$ is a trivially
unsatisfiable relation. An algebraically closed network that is not
trivially unsatisfiable is said to be \emph{algebraically consistent}. 

\subsubsection{Tractable Subclasses}

By restricting networks to certain subsets of relations $\S$, we
get the following property: if the algebraic closure of a network
over $\S$ is algebraically consistent, then this network is satisfiable.
Such subsets are said to be \emph{algebraically tractable}. In other
words, with an algebraically tractable subset $S$, to decide the
satisfiability of a network over $S$, it suffices to verify that
its algebraic closure is not trivially inconsistent. The search for
algebraically tractable subsets has focused on particular subsets~\cite{ligozat2013qualitative}.
A subset $\S\subseteq\A$ is said a \emph{subclass} if it is closed
under intersection, composition, and converse (i.e. for all $R,R'\in\S$,
we have $R\cap R'\in\S$, $R\comp R'\in\S$, and $\inv R\in\S$).\emph{
}Subclasses containing all basic relations (i.e. $\Base\subseteq\S$)
are called \emph{subalgebras}. A subset $\S\subseteq\A$ is said \emph{$\conv$-closed}
if for all $R\in\S$, $\conv(R)\in\S$. Finally, we say that a subset
$\S$ is \emph{Cartesian} if $\S=\S_{1}\times\cdots\times\S_{m}$.

Note that a list of conditions guaranteeing algebraic tractability
has been identified~\cite{cohen2017checking} (see the slicing and
refinement theorems). One of these conditions is algebraic stability
by a refinement $H$. A \emph{refinement} $H$ is a function from
$\A$ to $\A$ satisfying $H\left(R\right)\subseteq R$. A subset
$\S\subseteq\A$ is \emph{algebraically stable by $H$} if for any
algebraically consistent network $N$ over $\S$, the network $H\left(N\right)$
is algebraically consistent, where $H\left(N\right)$ is the network
obtained from $N$ by substituting each relation $N^{xy}$ by $H\left(N^{xy}\right)$.
$\RCCH$ has two fundamental refinements: $h_{\HH}\left(r\right)=\a{\tppi}\left(\a{\tpp}\left(\a{\po}\left(\a{\ec}\left(\a{\dc}\left(r\right)\right)\right)\right)\right)$
and $h_{\CH}\left(r\right)=\a{\tppi}\left(\a{\tpp}\left(\a{\ntppi}\left(\a{\ntpp}\left(\a{\po}\left(\a{\dc}\left(r\right)\right)\right)\right)\right)\right)$
with $\a b$, $b\in\Base$, the function from $\RCCAH$ to $\RCCAH$
defined by $\a b\left(r\right)=b$ if $b\subseteq r$ and $\a b\left(r\right)=r$
otherwise. $\HH$ and $\QH$ are algebraically stable by $h_{\HH}$
and $\CH$ is algebraically stable by $h_{\CH}$~\cite{renz1999maximal}.
Moreover, for every $r\in\HH\cup\QH$ such that $r\neq\vide$, $h_{\HH}\left(r\right)\in\Base_{\RCCH}$
and for every $r\in\CH\backslash\left\{ \vide\right\} $, $h_{\CH}\left(r\right)\in\Base_{\RCCH}$~\cite{renz1999maximal}.
In the following, we are interested in the refinement $H_{\S}$ defined
by $H_{\S}\left(R\right)_{i}=h_{\HH}\left(R_{i}\right)$ if $\S_{i}\subseteq\HH$
or $\S_{i}\subseteq\QH$ and $H_{\S}\left(R\right)_{i}=h_{\CH}\left(R_{i}\right)$
otherwise, for all $i\in\I$ and $R\in\RCCAH^{m}$, with $\S\subseteq\RCCAH^{m}$.

\section{Study of $\protect\TRCCHv$ Subclasses\label{sec:-temporaliser-sans-relations}}

In this section, we are interested in temporalized $\RCCH$ at neighboring
instants, i.e. $\TRCCHv$ (see Section~\ref{subsec:Temporalisations-de-RCC8}).
More precisely, we search for subclasses that are algebraically tractable.
Unfortunately, as the following proposition shows, there are no algebraically
tractable Cartesian subalgebras (at least for $m\geq4$). 
\begin{proposition}
Let $\R$ be a topological region space such that $\TRCCHv$ is a
sequential formalism.

No Cartesian subalgebra of $\TRCCHv$ is algebraically tractable (when
$m\geq4$).

No $\conv$-closed Cartesian subalgebra of $\TRCCHv$ is algebraically
tractable (if $m\geq2$). 
\end{proposition}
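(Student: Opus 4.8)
The plan is to exhibit, for each claim, an explicit network that is algebraically consistent but unsatisfiable, with all of its relations lying in an arbitrarily chosen Cartesian subalgebra (respectively $\conv$-closed Cartesian subalgebra). Since every Cartesian subalgebra $\S = \S_1 \times \cdots \times \S_m$ must contain $\Base \subseteq \S$ and hence each $\S_i$ contains every basic relation of $\RCCH$, the relations I build will be made only of basic-relation components and universal components (which every subalgebra contains), so they automatically belong to \emph{every} Cartesian subalgebra. This reduces both statements to a single construction idea: find a short sequence of $\RCCH$-scenarios that is ruled out by continuity (so the temporal network is unsatisfiable) but that the algebraic closure — which only propagates composition at each slice and the neighbourhood projection $\vois$ between adjacent slices — cannot detect as inconsistent.

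For the first claim ($m \ge 4$), the natural obstruction to exploit is the gap between the neighbourhood graph and genuine continuous evolution over several steps: the projection closure only checks transitions \emph{one step at a time}, so a relation like $(\ntpp,\B_{\RCCH},\B_{\RCCH},\dc)$ survives projection closure step-by-step even though no continuous motion goes from $\ntpp$ to $\dc$ while being consistent with the intermediate pigeonhole constraints forced by composition with other variables. First I would take two variables $x,y$ (so composition is vacuous among them beyond a single relation) and choose a sequence of $\RCCH$ basic relations $b_1,\dots,b_m$ that is a valid walk in the neighbourhood graph at each adjacent pair but for which no \emph{single} continuous path realizes all of $b_1,\dots,b_m$ in order — for instance forcing a region to grow then shrink in a way that the neighbourhood graph permits locally but that, combined across slices, is impossible; concretely the kind of sequence that already appears in the excerpt's projection-closure example, padded to length $m$. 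The network consisting of this single relation between $x$ and $y$ is algebraically closed (nothing to compose, each slice's relation is a basic relation hence trivially composition-closed and the sequence is $\vois$-closed by construction), not trivially unsatisfiable, yet unsatisfiable; and every component is a basic relation, so the relation lies in every Cartesian subalgebra. I need $m\ge4$ here because with two or three neighbouring instants every $\vois$-closed basic sequence is in fact realizable.

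For the second claim ($m \ge 2$), the point is weaker — we only need a network whose relations are individually in the subalgebra but whose \emph{projection closure is not}, so that being algebraically closed (which does \emph{not} re-apply $\conv$ inside a single relation unless the subclass is $\conv$-closed) is compatible with unsatisfiability; but since we now \emph{assume} $\conv$-closedness we must instead exploit composition across three variables. Here I would use three variables $x,y,z$ with two slices, pick relations whose slice-$1$ triple and slice-$2$ triple are each composition-consistent $\RCCH$ scenarios and each $\vois$-consistent slice-to-slice, but such that no continuous motion of three regions realizes the whole thing — the standard $\RCCH$ example $(\po,\po,\dc)$ mentioned in the excerpt as unsatisfiable-but-not-trivially-so is the seed: combine it with a first slice from which it is $\vois$-reachable, using only basic components, and if necessary involve $z$ to make the individual slices algebraically closed as $\RCCH$ networks while still globally inconsistent. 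Again every component is basic, hence lies in every Cartesian subalgebra, in particular every $\conv$-closed one.

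The main obstacle I expect is \emph{verifying unsatisfiability} of the exhibited networks: algebraic consistency is a finite mechanical check (close under $\comp$ and $\vois$, confirm no empty component appears), but showing that no continuous family of regions realizes the sequence requires a genuine topological/continuity argument — typically a monotonicity or parity invariant (e.g. the measure, or the number of connected components of a difference, varies monotonically along a continuous $\ntpp \to \cdots$ evolution) that the chosen sequence violates. I would isolate this as a small lemma about continuous evolutions of a single pair (resp. triple) of regions, prove it by a connectedness/intermediate-value argument on the relevant continuous real-valued functional, and then feed it the specific sequence. Once that lemma is in hand, the reduction to "every Cartesian subalgebra contains these relations" is immediate from $\Base \subseteq \S$ and $\B_{\RCCH} \in \S_i$.
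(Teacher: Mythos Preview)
Your two-variable plan for the first claim cannot succeed. With only $x$ and $y$ there is nothing for composition to act on, so algebraic consistency of the network reduces to $\conv$-consistency of the single relation $N^{xy}$; if that relation is a sequence of basic relations forming a walk in the neighbourhood graph, the network \emph{is} an algebraically closed scenario. But whether an algebraically closed scenario is satisfiable is precisely the question of completeness of $\TRCCHv$ for the chosen topological region space, and the proposition is stated for an arbitrary such space. In particular, for the spaces one actually cares about (e.g.\ regions of $\mathbb{R}^n$ with the dual-Hausdorff metric, where the later tractability results live) every $\vois$-closed walk of basic relations \emph{is} realizable by a continuously moving pair of regions, so no such walk is unsatisfiable. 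Your proposed ``monotonicity/parity invariant'' lemma would, if true, contradict completeness; the obstacle you flag is not merely hard but false in the relevant setting.

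The paper's proof avoids topology entirely. It builds a six-variable network $N$ whose relations are basic relations, the universal relation, and a few compositions/intersections thereof (so they lie in every Cartesian subalgebra), and shows \emph{algebraically} that $\ca(N)$ is consistent yet $N$ contains no algebraically closed scenario: the key relation $\ca(N)^{xy}=(\ntpp,\ \tpp\cup\ntpp\cup\eq,\ \tppi\cup\ntppi\cup\eq,\ \ntppi)$ forces any basic refinement to have $B_2^{xy}=\eq$ or $B_3^{xy}=\eq$, and the other variables $u,v,w,z$ are arranged so that neither slice $2$ nor slice $3$ admits a $\comp$-closed scenario with $\eq$ between $x$ and $y$. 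This is a pure finite combinatorial argument, valid for every topological region space; the interaction of composition across variables with projection across time is essential, which is why six variables are needed and why $m\ge4$ enters (to force the $\ntpp\to\cdots\to\ntppi$ passage through $\eq$). For the $\conv$-closed case with $m\ge2$ the paper simply reuses slices $2$ and $3$ of $\ca(N)$.

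Your second-claim sketch has the same defect: you again plan to certify unsatisfiability by a continuity argument rather than by exhibiting the absence of an algebraically closed sub-scenario, and the seed $(\po,\po,\dc)$ you cite is in fact detected by projection closure (since $\dc\cap\vois\po=\vide$), so it is not an example of the phenomenon you need.
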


\begin{proof}
We show the case $m=4$. The idea is that the algebraic closure can
produce relations $R$ verifying $\tpp\cup\eq\subseteq R_{i}\subseteq\tpp\cup\ntpp\cup\eq$
and $\tppi\cup\eq\subseteq R_{j}\subseteq\tppi\cup\ntppi\cup\eq$
with $\left|i-j\right|=1$ which causes that some unsatisfiable networks
are algebraically consistent. Let $\S$ be any Cartesian subalgebra
(thus $\S$ contains the closure of the basic relations and the universal
relation of $\RCCH$ under intersection, composition, and converse).
We show that there exists an unsatisfiable network over $\S$ whose
algebraic closure is algebraically consistent. Consider the network
$N$ satisfying: $\E=\left\{ u,v,w,x,y,z\right\} $, $N_{1}^{xy}=\ntpp$,
$N_{4}^{xy}=\ntppi$, $N_{1}^{xz}=\ntpp$, $N_{3}^{wz}=\ntpp$, $N_{1}^{yz}=\tppi$,
$N_{2}^{yz}=\po\cup\tpp\cup\tppi\cup\eq$, $N_{1}^{wx}=\tppi$, $N_{2}^{wx}=\po\cup\tpp\cup\tppi\cup\eq$,
$N_{2}^{wy}=\po\cup\tpp$, $N_{4}^{xu}=\ntppi$, $N_{2}^{vu}=\ntppi$,
$N_{4}^{yu}=\tpp$, $N_{3}^{yu}=\po\cup\tpp\cup\tppi\cup\eq$, $N_{4}^{vx}=\tpp$,
$N_{3}^{vx}=\po\cup\tpp\cup\tppi\cup\eq$, $N_{3}^{vy}=\po\cup\tppi$,
and $N_{i}^{ab}=\B_{\RCCH}$ in the other cases. The network is over
$\S$ (indeed, we have $\dc\comp\dc=\B_{\RCCH}$, $\tppi\comp\tpp=\po\cup\tpp\cup\tppi\cup\eq$,
and $\left(\ec\comp\ec\right)\cap\left(\ec\comp\ntpp\right)=\po\cup\tpp$).
Its algebraic closure $\ca\left(N\right)$ is the algebraically consistent
network satisfying: 
\begin{itemize}
\item $\ca\left(N\right)^{xy}=\cquatre{\ntpp}{\tpp\cup\ntpp\cup\eq}{\tppi\cup\ntppi\cup\eq}{\ntppi}$,
\item $\ca\left(N\right)^{yz}=\cquatre{\tppi}{\po\cup\tppi\cup\eq}{\B\backslash\left(\dc\cup\ec\right)}{\B\backslash\dc}$,
\item $\ca\left(N\right)^{xz}=\cquatre{\ntpp}{\tpp\cup\ntpp\cup\eq}{\B\backslash\left(\dc\cup\ec\right)}{\B\backslash\left(\dc\cup\ec\right)}$,
\item $\ca\left(N\right)^{wx}=\cquatre{\tppi}{\po\cup\tppi\cup\eq}{\B\backslash\left(\dc\cup\ntppi\right)}{\B}$, 
\item $\ca\left(N\right)^{wy}=\cquatre{\po\cup\tpp\cup\ntpp}{\po\cup\tpp}{\B\backslash\left(\dc\cup\ntppi\right)}{\B}$,
\item $\ca\left(N\right)^{wz}=\cquatre{\po\cup\tpp\cup\ntpp}{\tpp\cup\ntpp\cup\eq}{\ntpp}{\tpp\cup\ntpp\cup\eq}$, 
\item $\ca\left(N\right)^{ux}=\cquatre{\B\backslash\dc}{\B\backslash\left(\dc\cup\ec\right)}{\tpp\cup\ntpp\cup\eq}{\ntpp}$, 
\item $\ca\left(N\right)^{uy}=\cquatre{\B\backslash\left(\dc\cup\ec\right)}{\B\backslash\left(\dc\cup\ec\right)}{\po\cup\tppi\cup\eq}{\tppi}$,
\item $\ca\left(N\right)^{uz}=\cquatre{\B\backslash\left(\dc\cup\ec\right)}{\B\backslash\left(\dc\cup\ec\right)}{\B\backslash\dc}{\B\backslash\dc}$, 
\item $\ca\left(N\right)^{uw}=\cquatre{\B\backslash\dc}{\B}{\B}{\B}$, $\ca\left(N\right)^{vx}=\cquatre{\B\backslash\dc}{\B\backslash\left(\dc\cup\ec\cup\ntpp\right)}{\po\cup\tpp\cup\eq}{\tpp}$, 
\item $\ca\left(N\right)^{vy}=\cquatre{\B\backslash\left(\dc\cup\ec\right)}{\B\backslash\left(\dc\cup\ec\cup\ntpp\right)}{\po\cup\tppi}{\ec\cup\po\cup\tppi\cup\ntppi}$, 
\item $\ca\left(N\right)^{vz}=\cquatre{\B\backslash\left(\dc\cup\ec\right)}{\B\backslash\left(\dc\cup\ec\right)}{\B\backslash\dc}{\B}$, 
\item $\ca\left(N\right)^{vw}=\cquatre{\B\backslash\dc}{\B}{\B}{\B}$, 
\item $\ca\left(N\right)^{vu}=\cquatre{\tppi\cup\ntppi\cup\eq}{\ntppi}{\tppi\cup\ntppi\cup\eq}{\po\cup\tppi\cup\ntppi}$.
\end{itemize}
 However, $N$ is not satisfiable since $\ca\left(N\right)$ is not
satisfiable. Indeed, to refine $\ca\left(N\right)_{2}^{xy}$ by $\eq$,
$\tpp$, or $\ntpp$ and then to apply the algebraic closure gives
a trivially unsatisfiable network. This can be seen by the fact that
the only satisfiable basic relations $B$ of $\ca\left(N\right)^{xy}$
satisfy $B_{2}=\eq$ or $B_{3}=\eq$ and that there exists neither
algebraically closed scenario $S\subseteq\ca\left(N\right)_{2}$ satisfying
$S^{xy}=\eq$ nor algebraically closed scenario $S\subseteq\ca\left(N\right)_{3}$
satisfying $S^{xy}=\eq$. For example, by setting $\ca\left(N\right)_{2}^{xy}=\eq$,
we get $\ca\left(N\right)_{2}^{xy}=\ca\left(N\right)_{2}^{yz}=\ca\left(N\right)_{2}^{xz}=\eq$
and therefore $\ca\left(N\right)_{2}^{wx}=\ca\left(N\right)_{2}^{wy}=\ca\left(N\right)_{2}^{wz}=\ca\left(N\right)_{2}^{wx}\cap\ca\left(N\right)_{2}^{wy}\cap\ca\left(N\right)_{2}^{wz}$
i.e. $\left(\po\cup\tppi\cup\eq\right)\cap\left(\po\cup\tpp\right)\cap\left(\tpp\cup\ntpp\cup\eq\right)=\vide$.

Let $\S$ be a $\conv$-closed Cartesian subalgebra of $\TRCCHv$
with $m=2$. The network $N'$ satisfying $\E=\left\{ u,v,w,x,y,z\right\} $,
$N'_{1}=\ca\left(N\right)_{2}$, and $N'_{2}=\ca\left(N\right)_{3}$
is algebraically consistent, unsatisfiable, and over $\S$. 
\end{proof}

For this reason, we are looking for Cartesian subclasses that do not
contain all basic relations. In particular, we are interested in the
following subset of $\RCCAH$ : $\Hntpp$ defined by $\liste{r\in\HH}{\ntpp\subseteq r\implies\tpp\subseteq r\and\ntppi\subseteq r\implies\tppi\subseteq r}$.
It is easy to prove that this subset is a subclass.
\begin{lemma}
\label{lem:ss-classe}The subset $\Hntpp$ is a subclass.
\end{lemma}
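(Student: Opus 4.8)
The plan is to verify the three closure properties defining a subclass — closure under converse, intersection, and composition — one at a time, leaning on the fact recalled in the excerpt (from \cite{renz1999maximal}) that $\HH$ is itself a subclass. Since $\Hntpp\subseteq\HH$, for $r,r'\in\Hntpp$ each of $\inv r$, $r\cap r'$, $r\comp r'$ already lies in $\HH$; it therefore remains only to check that the two extra conditions defining $\Hntpp$, namely ``$\ntpp\subseteq r\implies\tpp\subseteq r$'' and ``$\ntppi\subseteq r\implies\tppi\subseteq r$'', survive these three operations.

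Converse and intersection are immediate. For converse, $\inv{\ntpp}=\ntppi$ and $\inv{\tpp}=\tppi$, so the two conditions simply exchange roles when we pass from $r$ to $\inv r$; hence $r\in\Hntpp$ gives $\inv r\in\Hntpp$. For intersection, if $\ntpp\subseteq r\cap r'$ then $\ntpp\subseteq r$ and $\ntpp\subseteq r'$, so $\tpp\subseteq r$ and $\tpp\subseteq r'$ because $r,r'\in\Hntpp$, whence $\tpp\subseteq r\cap r'$; the $\ntppi$ condition is handled identically.

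The substantive case is composition. First I would reduce the $\ntppi$ condition to the $\ntpp$ condition: since $\inv{(r\comp r')}=\inv{r'}\comp\inv r$ and $\Hntpp$ is (by the previous paragraph) closed under converse, it suffices to prove that $r,r'\in\Hntpp$ and $\ntpp\subseteq r\comp r'$ imply $\tpp\subseteq r\comp r'$. Unfolding weak composition, $\ntpp\subseteq r\comp r'$ means there are basic relations $b\subseteq r$ and $b'\subseteq r'$ with $\ntpp\subseteq b\comp b'$. The combinatorial core, read off the $\RCCH$ composition table, is that $b\comp b'$ contains $\ntpp$ but not $\tpp$ \emph{only} for the five pairs $(b,b')\in\{(\ntpp,\eq),(\eq,\ntpp),(\ntpp,\ntpp),(\ntpp,\tpp),(\tpp,\ntpp)\}$, each of which composes to exactly $\{\ntpp\}$; for every other pair, $\ntpp\subseteq b\comp b'$ already forces $\tpp\subseteq b\comp b'$ and we are done. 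In each of the five exceptional cases at least one of $b,b'$ equals $\ntpp$, say $b=\ntpp$ (the case $b'=\ntpp$ is symmetric); then $\ntpp\subseteq r$, hence $\tpp\subseteq r$ since $r\in\Hntpp$, and using $\tpp\comp\eq=\tpp$, $\eq\comp\tpp=\tpp$, and $\tpp\subseteq\tpp\comp\tpp$ — together with $\tpp\subseteq r'$, which the pair $(\ntpp,\ntpp)$ additionally supplies via $r'\in\Hntpp$ — one exhibits in every case basic relations contained in $r$ and in $r'$ whose composition contains $\tpp$, so $\tpp\subseteq r\comp r'$. Combined with $r\comp r'\in\HH$, this yields $r\comp r'\in\Hntpp$.

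The only delicate point is the composition-table claim — that among the $64$ basic compositions exactly those five contain $\ntpp$ without containing $\tpp$ — which is a finite verification against the standard $\RCCH$ table (and is consistent with the table facts already used in the excerpt, such as $\tppi\comp\tpp=\po\cup\tpp\cup\tppi\cup\eq$). The underlying intuition is that a composition can force $\ntpp$ with no tangential companion precisely when one argument is $\ntpp$ and the other is $\eq$, $\tpp$, or $\ntpp$.
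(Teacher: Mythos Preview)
Your proof is correct: the reduction to the extra $\ntpp\Rightarrow\tpp$ and $\ntppi\Rightarrow\tppi$ conditions (using that $\HH$ is already a subclass) is the natural approach, and your case analysis for composition against the $\RCCH$ table is accurate --- the five pairs you list are exactly those whose composition equals $\{\ntpp\}$. The paper itself gives no proof of this lemma at all, merely remarking that ``it is easy to prove that this subset is a subclass'', so your argument fills in precisely the omitted verification.
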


We show that we can obtain algebraically tractable Cartesian subclasses
of $\TRCCHv$ satisfying $\S_{i}\in\left\{ \QH,\Hntpp\right\} $.
For this, we apply the refinement theorem (by using the refinement
$H_{\S}$ ; see Section~\ref{subsec:Temporalisations-de-RCC8}).
We begin by showing the conditions of the theorem. 
\begin{lemma}
\label{lem:voisinage-clos}We have the following properties: 
\begin{itemize}
\item $\forall r\in\QH$, $\vois r\in\Hntpp$ and 
\item $\forall r\in\Hntpp$, $\vois r\in\QH$. 
\end{itemize}
\end{lemma}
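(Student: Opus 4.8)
The plan is to reduce both inclusions to a finite check over the eight basic relations, using that $\vois$ is a projection, so that for every $r\in\RCCAH$ one has $\vois r=\bigcup\{\vois b\mid b\in\Base_{\RCCH},\ b\subseteq r\}$. First I would tabulate, from Table~\ref{tab:Correspondances-des-relations} and the identity $\vois\inv b=\inv{\vois b}$, the ``preimage'' set $P_{c}:=\{b\in\Base_{\RCCH}\mid c\subseteq\vois b\}$ for the relevant $c$; the only values needed are $P_{\ntpp}=\{\tpp,\ntpp,\eq\}$, $P_{\ntppi}=\{\tppi,\ntppi,\eq\}$, $P_{\tpp}=\{\po,\tpp,\ntpp,\eq\}$, $P_{\tppi}=\{\po,\tppi,\ntppi,\eq\}$, $P_{\po}=\{\ec,\po,\tpp,\tppi,\eq\}$, and $P_{\eq}=\Base_{\RCCH}\setminus\{\dc,\ec\}$. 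With this notation, $c\subseteq\vois r$ holds iff $r$ contains some basic relation in $P_{c}$.

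Next I would prove three facts valid for \emph{every} $r\in\RCCAH$, which dispose of all parts of the set-definitions except the extra $\QH$-condition. (U1) $\ntpp\subseteq\vois r\implies\tpp\subseteq\vois r$, and symmetrically $\ntppi\subseteq\vois r\implies\tppi\subseteq\vois r$: immediate since $P_{\ntpp}\subseteq P_{\tpp}$ and $P_{\ntppi}\subseteq P_{\tppi}$. (U2) $\vois r$ is never of the form $r_{1}\cup\ec\cup r_{2}\cup\eq$ with $r_{1}\in\{\vide,\dc\}$ and $r_{2}\in\{\ntpp,\ntppi\}$: such a relation would contain $\ntpp$ but not $\tpp$ (or $\ntppi$ but not $\tppi$), contradicting (U1). (U3) $r\notin\Nrcc\implies\vois r\notin\Nrcc$: assume $\vois r\in\Nrcc$; then $\po\nsubseteq\vois r$ forces, via $P_{\po}$, that $r\subseteq\dc\cup\ntpp\cup\ntppi$; combined with this, $\vois r\cap(\tpp\cup\ntpp)\neq\vide$ forces $\ntpp\subseteq r$ (the only member of $\{\dc,\ntpp,\ntppi\}$ lying in $P_{\tpp}\cup P_{\ntpp}$ is $\ntpp$) and symmetrically $\ntppi\subseteq r$, so $r$ equals $\ntpp\cup\ntppi$ or $\dc\cup\ntpp\cup\ntppi$, both of which are in $\Nrcc$ — a contradiction. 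Since $\NPrcc=\Nrcc\cup\{r_{1}\cup\ec\cup r_{2}\cup\eq\mid r_{1}\in\{\vide,\dc\},\,r_{2}\in\{\ntpp,\ntppi\}\}$, facts (U2) and (U3) combine into the key corollary $r\in\Prcc\implies\vois r\in\Prcc$.

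For the first bullet, take $r\in\QH\subseteq\Prcc$; the corollary gives $\vois r\in\Prcc$, (U1) gives the two $\Hntpp$-implications for $\vois r$, which a fortiori give the two defining implications of $\HH$; hence $\vois r\in\HH$ and then $\vois r\in\Hntpp$ (note only $r\in\Prcc$ is used). For the second bullet, take $r\in\Hntpp\subseteq\HH\subseteq\Prcc$; again $\vois r\in\Prcc$, and it remains to verify the $\QH$-condition. Suppose $\eq\subseteq\vois r$; then $r$ contains some $b\in P_{\eq}=\Base_{\RCCH}\setminus\{\dc,\ec\}$. If $b\in\{\po,\tpp,\tppi,\eq\}$ then $b\in P_{\po}$, so $\po\subseteq\vois r$; if $b=\ntpp$, the defining property of $\Hntpp$ forces $\tpp\subseteq r$, hence $\po\subseteq\vois r$ (as $\tpp\in P_{\po}$), and symmetrically if $b=\ntppi$. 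Thus $\eq\subseteq\vois r$ already implies $\po\subseteq\vois r$, so the $\QH$-condition holds and $\vois r\in\QH$.

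The only step carrying real content is (U3): it is the single place where one must argue carefully about which basic relations $\vois$ can and cannot create, whereas everything else follows directly from the inclusions among the sets $P_{c}$ and the defining property of $\Hntpp$. I would double-check (U3) by explicitly verifying that both residual relations $\ntpp\cup\ntppi$ and $\dc\cup\ntpp\cup\ntppi$ indeed belong to $\Nrcc$.
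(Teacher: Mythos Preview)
Your proof is correct. The underlying observations coincide with the paper's---both arguments exploit that $\po\nsubseteq\vois r$ forces $r\subseteq\dc\cup\ntpp\cup\ntppi$, and that $\ntpp\subseteq\vois r$ forces $\tpp\subseteq\vois r$---but the organization differs. The paper argues by a direct case split on whether $\po\subseteq\vois r$: in the ``yes'' case the $\Prcc$-membership and the extra implication clauses are immediate; in the ``no'' case the class hypothesis on $r$ ($\QH$, resp.\ $\Hntpp$) leaves only a handful of relations, each checked by hand. You instead tabulate the preimage sets $P_{c}$ and extract three facts (U1)--(U3) valid for \emph{all} $r\in\RCCAH$, from which the closure $r\in\Prcc\Rightarrow\vois r\in\Prcc$ drops out as a standalone corollary; the two bullets then reduce to verifying only the residual $\Hntpp$- and $\QH$-clauses. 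Your route is a bit more general---the first bullet is in fact established for every $r\in\Prcc$, the $\QH$-hypothesis never being invoked---and makes the $\Prcc$-stability explicit, while the paper's version is terser and folds that stability into the case analysis.
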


\begin{proof}
From the definitions of $\HH$, $\QH$, and the projections of $\TRCCHv$,
we derive the lemma. We show on the one hand that for any $r\in\QH$,
we have $\vois r\in\Hntpp$. Let $r\in\QH$. If $\po\nsubseteq\vois r$
then $r\subseteq\dc\cup\ntpp\cup\ntppi$ (only $\dc$, $\ntpp$, and
$\ntppi$ satisfy $\po\nsubseteq\vois b$ with $b\in\B_{\RCCH}$).
Therefore, $r$ is $\dc$, $\ntpp$, $\ntppi$, $\dc\cup\ntpp$, or
$\dc\cup\ntppi$ (since $r\in\QH$). Thus, $\vois r\in\Hntpp$. Suppose
$\po\subseteq\vois r$. If $\ntpp\nsubseteq\vois r$ and $\ntppi\nsubseteq\vois r$
then $\vois r\in\Hntpp$. Otherwise, if $\ntpp\subseteq\vois r$ then
either $\tpp\subseteq r$ or $\ntpp\subseteq r$ or $\eq\subseteq r$.
In all cases, $\tpp\subseteq\vois r$. Similarly, if $\ntppi\subseteq\vois r$
then $\tppi\subseteq\vois r$. Therefore, in all cases, we have $\vois r\in\Hntpp$.

We show on the other hand that for any $r\in\Hntpp$, we have $\vois r\in\QH$.
Let $r\in\Hntpp$. If $\po\nsubseteq\vois r$ then $r\subseteq\dc\cup\ntpp\cup\ntppi$
and therefore $r=\dc$. Thus, $\vois r=\dc\cup\ec$ and therefore
$\vois r\in\QH$. If $\po\subseteq\vois r$, we have $\vois r\in\QH$
by definition.
\end{proof}

\begin{lemma}
\label{lem:voisinage-stable}Let $\S\in\left\{ \QH\times\QH,\QH\times\HH,\HH\times\QH\right\} $
be a subclass of $\TRCCHv$ $(m=2)$ and let $R\in\S$.

If $R$ is $\conv$-consistent then $H_{\S}\left(R\right)$ is $\conv$-consistent. 
\end{lemma}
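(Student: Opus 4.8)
The statement concerns a relation $R = (R_1, R_2)$ of a two-slice multi-algebra where $\S \in \{\QH \times \QH, \QH \times \HH, \HH \times \QH\}$. I must show that if $R$ is $\conv$-consistent (i.e.\ $\conv$-closed and not trivially unsatisfiable, so $R_1 \neq \vide$, $R_2 \neq \vide$, $R_1 \subseteq \vois R_2$, and $R_2 \subseteq \vois R_1$), then $H_\S(R)$ is also $\conv$-consistent, where $H_\S(R)_i = h_{\HH}(R_i)$ when $\S_i \subseteq \HH$ or $\S_i \subseteq \QH$, and $= h_{\CH}(R_i)$ otherwise. Since every case here has $\S_i \in \{\QH, \HH\}$, in fact $H_\S(R)_i = h_{\HH}(R_i)$ for both $i$. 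So the task reduces to: if $R_1 \subseteq \vois R_2$ and $R_2 \subseteq \vois R_1$ with $R_1, R_2$ nonempty and belonging to $\QH$ or $\Hntpp$, then $h_{\HH}(R_1)$ and $h_{\HH}(R_2)$ are nonempty and satisfy $h_{\HH}(R_1) \subseteq \vois\, h_{\HH}(R_2)$ and $h_{\HH}(R_2) \subseteq \vois\, h_{\HH}(R_1)$.

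\textbf{First steps.} Nonemptiness of $h_{\HH}(R_i)$ is immediate: $h_{\HH}$ is a refinement, and by the cited fact from \cite{renz1999maximal}, for every nonempty $r \in \HH \cup \QH$ we have $h_{\HH}(r) \in \Base_{\RCCH}$. So $h_{\HH}(R_i)$ is a single basic relation, and in particular nonempty. It remains to check the two $\conv$-closure inclusions. Write $b_1 = h_{\HH}(R_1)$ and $b_2 = h_{\HH}(R_2)$; these are basic relations with $b_i \subseteq R_i$. By symmetry it suffices to establish $b_1 \subseteq \vois b_2$, i.e.\ $b_1$ and $b_2$ are neighbours.

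\textbf{Main argument.} The key observation is to understand which basic relation $h_{\HH}$ selects. By definition $h_{\HH}(r) = \a{\tppi}(\a{\tpp}(\a{\po}(\a{\ec}(\a{\dc}(r)))))$, so it picks the first relation in the priority order $\dc, \ec, \po, \tpp, \tppi$ that is contained in $r$; if none of these is contained in $r$ (which, for $r \in \QH \cup \HH$ nonempty and not already basic, forces $r$ to contain $\eq$ or one of $\ntpp, \ntppi$ under the $\HH/\QH$ constraints) the value is whatever remains. I will analyze this by cases on which of $b_1, b_2$ equals $\dc$. The crucial case, mirroring Lemma~\ref{lem:voisinage-clos}, is when $b_1 \in \{\dc, \ec, \ntpp, \ntppi, \eq\}$: I must verify, using $R_1 \subseteq \vois R_2$ and $R_2 \subseteq \vois R_1$ together with the membership of $R_1, R_2$ in $\QH$ or $\Hntpp$, that the basic relation $h_{\HH}$ extracts from $R_1$ is a neighbour of the one it extracts from $R_2$. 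For instance, if $b_1 = \dc$ then $\dc \subseteq R_1$, so $R_2 \subseteq \vois R_1$ forces $R_2 \subseteq \vois \dc = \dc \cup \ec$ hence $R_2 \in \{\dc, \ec, \dc\cup\ec\}$, whence $b_2 = h_{\HH}(R_2) \in \{\dc, \ec\}$, and $\dc$ is a neighbour of both. The remaining cases ($b_1 = \po$, $b_1 = \tpp$, $b_1 = \tppi$, or $R_1$ forced to contain $\eq/\ntpp/\ntppi$) are handled the same way, each time: from $b_i \subseteq R_i$ deduce a constraint on $R_{3-i}$ via the $\conv$-closure hypothesis, then use the $\QH$- or $\Hntpp$-membership to pin down the possible shapes of $R_{3-i}$, then read off $b_{3-i}$ and check neighbourhood against the dual-Hausdorff neighbourhood table (Table~\ref{tab:Correspondances-des-relations}). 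Throughout, Lemma~\ref{lem:voisinage-clos} ($\vois$ maps $\QH$ into $\Hntpp$ and vice versa) is what guarantees the intermediate relations stay within the classes we understand.

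\textbf{Main obstacle.} The bookkeeping is the real work: there are three choices of $\S$ and for each a moderate number of cases on the pair $(b_1, b_2)$, and one must be careful that when $\S_i = \QH$ the relevant closure facts ($\vois(\QH) \subseteq \Hntpp$) are invoked in the right direction, and when $\S_i = \HH$ one uses that $\Hntpp \subseteq \HH$ so $h_{\HH}$ still lands on a basic relation. The subtle point — the same phenomenon that breaks Cartesian subalgebras in the Proposition — is the interaction of $\tpp, \ntpp, \eq$ (and their converses): I must check that $h_{\HH}$ never simultaneously produces a "$\ntpp$-like" relation on one side and an incompatible one on the other, and this is exactly where the extra constraint defining $\Hntpp$ ($\ntpp \subseteq r \implies \tpp \subseteq r$) does its job, ensuring that whenever $R_i$ could have yielded $\ntpp$ it actually contains $\tpp$, which $h_{\HH}$ prefers. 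I expect the proof to reduce, after the case split, to a direct inspection of Table~\ref{tab:Correspondances-des-relations} and the definitions of $\QH$ and $\Hntpp$, with no genuinely hard step beyond that verification.
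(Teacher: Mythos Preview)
Your overall plan---a case split on $b_1=h_{\HH}(R_1)$ following the priority order $\dc,\ec,\po,\tpp,\tppi$---is exactly the paper's approach. Two concrete errors in the sketch, however, would derail the verification if carried through.

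\emph{Wrong direction in the closure argument.} In your $\dc$ example you write ``$\dc\subseteq R_1$, so $R_2\subseteq\vois R_1$ forces $R_2\subseteq\vois\dc$''. This is backwards: $\vois$ is monotone, so $\dc\subseteq R_1$ yields only $\vois\dc\subseteq\vois R_1$, not the reverse. (Take $R_1=R_2=\B_{\RCCH}\in\QH$: here $\dc\subseteq R_1$ yet $R_2\nsubseteq\dc\cup\ec$.) The correct step uses the \emph{other} closure inclusion: from $\dc\subseteq R_1\subseteq\vois R_2$ one deduces merely $R_2\cap(\dc\cup\ec)\neq\vide$, whence $h_{\HH}(R_2)\in\{\dc,\ec\}$ by priority. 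Throughout the paper's proof one only ever obtains that $R_2$ \emph{meets} a certain set, never that it is contained in one; this weaker information is what the priority mechanism of $h_{\HH}$ is designed to exploit.

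\emph{$\Hntpp$ is irrelevant here.} The hypothesis is $R_i\in\HH\cup\QH$, not $R_i\in\Hntpp$, so the constraint ``$\ntpp\subseteq r\Rightarrow\tpp\subseteq r$'' is unavailable, and Lemma~\ref{lem:voisinage-clos} is not used at all in this proof. What actually resolves the delicate $\tpp$/$\tppi$ subcases is the assumption that \emph{at least one} of $\S_1,\S_2$ equals $\QH$---note that $\HH\times\HH$ is deliberately excluded from the statement. For example, when $h_{\HH}(R_1)=\tpp$, $R_2\cap(\po\cup\tpp)=\vide$, $\ntpp\nsubseteq R_2$ and $\eq\subseteq R_2$, one has $R_2\subseteq\tppi\cup\ntppi\cup\eq$; then either $\S_2=\QH$ gives $R_2=\eq$ directly, or $\S_1=\QH$ gives $\eq\nsubseteq R_1$, hence $R_1\subseteq\tpp\cup\ntpp$, hence $R_2\subseteq\vois R_1\cap(\tppi\cup\ntppi\cup\eq)=\{\eq\}$. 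Without the $\QH$ hypothesis the lemma is false: $R=(\tpp\cup\ntpp\cup\eq,\;\tppi\cup\ntppi\cup\eq)$ lies in $\HH\times\HH$ and is $\conv$-consistent, but $H_\S(R)=(\tpp,\tppi)$ is not.
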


\begin{proof}
From the definitions of $\HH$, $\QH$, $h_{\HH}$, and the projections
of $\TRCCHv$, we derive the lemma. Let $\S\in\left\{ \QH\times\QH,\QH\times\HH,\HH\times\QH\right\} $
and $R\in\S$ a $\conv$-consistent relation. We show that $H_{\S}\left(R\right)$
is $\conv$-consistent. 
\begin{itemize}
\item If $\dc\subseteq R_{1}$ then $R_{2}\cap\left(\dc\cup\ec\right)\neq\vide$
and therefore $H_{\S}\left(R\right)=\left(\dc,\dc\right)$ or $H_{\S}\left(R\right)=\left(\dc,\ec\right)$. 
\item Otherwise, if $\ec\subseteq R_{1}$ then $R_{2}\cap\left(\dc\cup\ec\cup\po\right)\neq\vide$
and therefore $H_{\S}\left(R\right)\in\left\{ \left(\ec,\dc\right),\left(\ec,\ec\right),\left(\ec,\po\right)\right\} $. 
\item Otherwise, if $\po\subseteq R_{1}$ then $R_{2}\cap\left(\ec\cup\po\cup\tpp\cup\eq\cup\tppi\right)\neq\vide$
and $\dc\nsubseteq R_{2}$. 
\begin{itemize}
\item If $R_{2}\cap\left(\ec\cup\po\cup\tpp\cup\tppi\right)\neq\vide$,
then $H_{\S}\left(R\right)_{1}=\po$ and $H_{\S}\left(R\right)_{2}\in\left\{ \ec,\po,\tpp,\tppi\right\} $. 
\item Otherwise, $\eq\subseteq R_{2}$ and $R_{2}\subseteq\eq\cup\ntpp\cup\ntppi$
and therefore $R_{2}=\eq$ ($R_{2}\in\HH\cup\QH$). Thus, $H_{\S}\left(R\right)=\left(\po,\eq\right)$. 
\end{itemize}
\item Otherwise, if $\tpp\subseteq R_{1}$ then $R_{2}\cap\left(\po\cup\tpp\cup\ntpp\cup\eq\right)\neq\vide$,
$R_{2}\cap\left(\dc\cup\ec\right)=\vide$ and $R_{1}\subseteq\tpp\cup\ntpp\cup\eq$
($R_{1}\in\HH\cup\QH$). 
\begin{itemize}
\item If $R_{2}\cap\left(\po\cup\tpp\right)\neq\vide$ then $H_{\S}\left(R\right)$
is either $\left(\tpp,\po\right)$ or $\left(\tpp,\tpp\right)$. 
\item Otherwise, $R_{2}\subseteq\ntpp\cup\eq\cup\tppi\cup\ntppi$. 
\begin{itemize}
\item If $\ntpp\subseteq R_{2}$ then $R_{2}=\ntpp$ ($R_{2}\in\HH\cup\QH$)
and therefore $H_{\S}\left(R\right)=\left(\tpp,\ntpp\right)$. 
\item If $\eq\subseteq R_{2}$ then $R_{2}\subseteq\tppi\cup\ntppi\cup\eq$
($R_{2}\in\HH\cup\QH$). 
\begin{itemize}
\item If $\S_{1}=\QH$ then $R_{1}\subseteq\tpp\cup\ntpp$ and therefore
$R_{2}=\eq$. Thus, $H_{\S}\left(R\right)=\left(\tpp,\eq\right)$. 
\item If $\S_{2}=\QH$ then $R_{2}=\eq$ and therefore $H_{\S}\left(R\right)=\left(\tpp,\eq\right)$. 
\end{itemize}
\end{itemize}
\end{itemize}
\item Otherwise, if $\tppi\subseteq R_{1}$ then $R_{2}\cap\left(\dc\cup\ec\right)=\vide_{2}$,
$R_{2}\cap(\po\cup\tppi\cup\ntppi\cup\eq)\neq\vide$ and $R_{1}\subseteq\tppi\cup\ntppi\cup\eq$
($R_{1}\in\HH\cup\QH$). 
\begin{itemize}
\item If $\po\subseteq R_{2}$ then $H_{\S}\left(R\right)=\left(\tppi,\po\right)$. 
\item Otherwise, either $R_{2}\subseteq\tppi\cup\ntppi\cup\eq$ or $R_{2}\subseteq\tpp\cup\ntpp\cup\eq$
and $\eq\subseteq R_{2}$ ($R_{2}\in\HH\cup\QH$). 
\begin{itemize}
\item If $\S_{1}=\QH$ then $R_{1}\subseteq\tppi\cup\ntppi$ and therefore
either $R_{2}\subseteq\tppi\cup\ntppi\cup\eq$ or $R_{2}=\eq$. Thus,
$H_{\S}\left(R\right)$ is $\left(\tppi,\tppi\right)$, $\left(\tppi,\ntppi\right)$,
or $\left(\tppi,\eq\right)$. 
\item If $\S_{2}=\QH$ then either $R_{2}\subseteq\tppi\cup\ntppi$ or $R_{2}=\eq$.
Thus, $H_{\S}\left(R\right)=\left(\tppi,\tppi\right)$ or $H_{\S}\left(R\right)=\left(\tppi,\ntppi\right)$
or $H_{\S}\left(R\right)=\left(\tppi,\eq\right)$. 
\end{itemize}
\end{itemize}
\item Otherwise $R_{1}\subseteq\ntpp\cup\eq\cup\ntppi$ : 
\begin{itemize}
\item If $\ntpp\subseteq R_{1}$ then $R_{1}=\ntpp$ ($R_{1}\in\HH\cup\QH$)
and $R_{2}\subseteq\tpp\cup\ntpp\cup\eq$. Thus, $H_{\S}\left(R\right)=\left(\ntpp,\tpp\right)$
or $H_{\S}\left(R\right)=\left(\ntpp,\ntpp\right)$ or $H_{\S}\left(R\right)=\left(\ntpp,\eq\right)$. 
\item If $\ntppi\subseteq R_{1}$ then $R_{1}=\ntppi$ ($R_{1}\in\HH\cup\QH$)
and $R_{2}\subseteq\tppi\cup\ntppi\cup\eq$. Thus, $H_{\S}\left(R\right)=\left(\ntppi,\tppi\right)$
or $H_{\S}\left(R\right)=\left(\ntppi,\ntppi\right)$ or $H_{\S}\left(R\right)=\left(\ntppi,\eq\right)$. 
\item If $\eq\subseteq R_{1}$ then $R_{1}=\eq$ ($R_{1}\in\HH\cup\QH$)
and $R_{2}\subseteq\po\cup\tpp\cup\ntpp\cup\eq\cup\tppi\cup\ntppi$.
Thus, $H_{\S}\left(R\right)_{1}=\eq$ and $H_{\S}\left(R\right)_{2}$
is either $\po$, $\tpp$, $\ntpp$, $\tppi$, $\ntppi$, or $\eq$. 
\end{itemize}
\end{itemize}
\end{proof}

Note the following proposition, before identifying the tractable subclasses.
It shows that although the tractable subalgebras of $\RCCH$ cannot
be combined to obtain algebraically tractable Cartesian subalgebras,
the algebraically consistent networks over the majority of these combinations
are satisfiable. 
\begin{proposition}
\label{prop:vois-acoherent-satisfiable}Let $\R$ be a topological
region space such that $\TRCCHv$ is a complete sequential formalism.
Let $\S$ be a subset of $\TRCCHv$ satisfying $\S_{i}\in\left\{ \HH,\QH\right\} $
and $\S_{i}=\HH\implies\left(i=m\ou\S_{i+1}=\QH\right)\and\left(i=1\ou\S_{i-1}=\QH\right)$
for all $i\in\I$.

Algebraically consistent networks over $\S$ are satisfiable. 
\end{proposition}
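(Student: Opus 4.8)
The plan is to show that for any algebraically consistent network $N$ over $\S$, the refined network $H_\S(N)$ is an algebraically closed scenario that refines $N$; since $\TRCCHv$ is a complete sequential formalism, $H_\S(N)$ is then satisfiable, and because $H_\S(N)$ refines $N$ so is $N$. First note that $\S_i\in\{\HH,\QH\}$ means $\S_i\subseteq\HH$ or $\S_i\subseteq\QH$ for every $i\in\I$, so $H_\S$ applies $h_{\HH}$ on every slice: $H_\S(N)^{xy}_i=h_{\HH}(N^{xy}_i)$ for all distinct $x,y\in\E$ and all $i$. Since $N$ is algebraically consistent, each relation $N^{xy}$ is $\conv$-consistent, hence not trivially unsatisfiable, so $N^{xy}_i\in(\HH\cup\QH)\setminus\{\vide\}$ and therefore $h_{\HH}(N^{xy}_i)\in\Base_{\RCCH}$~\cite{renz1999maximal}. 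Thus $H_\S(N)$ is a scenario, and it refines $N$ because $h_{\HH}(r)\subseteq r$. It then remains to prove that $H_\S(N)$ is algebraically closed; being a scenario it is not trivially unsatisfiable, so this makes it algebraically consistent, after which the completeness argument above finishes the proof.

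For closure under composition I would argue slicewise. For each $i\in\I$, the slice $N_i$ is an $\RCCH$-network over $\S_i\in\{\HH,\QH\}$ which is algebraically consistent (it inherits composition-closedness and non-trivial-unsatisfiability from $N$, componentwise). Since $\HH$ and $\QH$ are both algebraically stable by $h_{\HH}$~\cite{renz1999maximal}, the network $h_{\HH}(N_i)=H_\S(N)_i$ is algebraically consistent, in particular closed under composition. As composition in $\TRCCHv$ is componentwise, doing this for every $i$ yields that $H_\S(N)$ is closed under composition.

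The remaining point — closure under projection of each $H_\S(N)^{xy}$ — is where the hypothesis on $\S$ enters, and it is the crux of the argument. Because the projections $\conv_i^j$ of $\TRCCHv$ equal the universal relation whenever $|i-j|\geq 2$, a relation of $\RCCAH^m$ is $\conv$-closed exactly when its restriction to each pair of consecutive indices is $\conv$-closed. So fix distinct $x,y\in\E$ and $i\in\I\setminus\{m\}$, and consider the two-slice relation $(N^{xy}_i,N^{xy}_{i+1})$: it is $\conv$-consistent (not trivially unsatisfiable, and $\conv$-closedness of $N^{xy}$ forces $N^{xy}_{i+1}\subseteq\vois N^{xy}_i$ and $N^{xy}_i\subseteq\vois N^{xy}_{i+1}$), and it lies over $\S_i\times\S_{i+1}$. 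The hypothesis on $\S$ forbids $\S_i=\S_{i+1}=\HH$, so $\S_i\times\S_{i+1}$ is one of $\QH\times\QH$, $\QH\times\HH$, $\HH\times\QH$, and Lemma~\ref{lem:voisinage-stable} applies: $(h_{\HH}(N^{xy}_i),h_{\HH}(N^{xy}_{i+1}))=(H_\S(N)^{xy}_i,H_\S(N)^{xy}_{i+1})$ is $\conv$-consistent, in particular $\conv$-closed. Ranging over all $i$ and all $x,y$, every $H_\S(N)^{xy}$ is $\conv$-closed, so $H_\S(N)$ is algebraically closed, which completes the argument.

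The main obstacle is precisely this last step: after refining, $h_{\HH}$ could a priori select on two neighbouring slices basic relations that are not neighbours (for instance $\tpp$ and $\tppi$), which would destroy $\conv$-closedness; Lemma~\ref{lem:voisinage-stable}, and hence the assumption that no two consecutive slices of $\S$ equal $\HH$, is exactly what rules this out. Everything else is routine bookkeeping: transferring slicewise $\RCCH$ facts ($h_{\HH}$ lands in $\Base_{\RCCH}$, and $\HH$, $\QH$ are stable by $h_{\HH}$) to the multi-algebra componentwise, and one appeal to completeness of $\TRCCHv$.
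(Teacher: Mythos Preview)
Your proof is correct and follows essentially the same route as the paper's: verify that $H_\S$ sends $\S$ into scenarios, that each slice is preserved (composition-wise) by stability of $\HH$ and $\QH$ under $h_{\HH}$, and that projection-closedness survives on consecutive pairs via Lemma~\ref{lem:voisinage-stable}, then conclude by completeness. The only difference is packaging: the paper cites the refinement theorem of~\cite{cohen2017checking} as a black box (checking its hypotheses --- $H_\S$ is a refinement into $\Base\cup\{(\vide,\ldots,\vide)\}$, $\S$ is algebraically stable by $H_\S$, and the target class is satisfiable by completeness), whereas you unfold that theorem's argument explicitly; the ingredients are identical.
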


\begin{proof}
We apply the refinement theorem~\cite{cohen2017checking}. $H_{\S}$
is a refinement from $\S$ to the set $\Base\cup\left\{ \left(\vide,\ldots,\vide\right)\right\} $
(since $h_{\HH}$ is a refinement from $\HH\cup\QH$ to $\Base_{\RCCH}\cup\left\{ \vide\right\} $~\cite{renz1999maximal}).
$\S$ is algebraically stable by $H_{\S}$. Indeed, since on the one
hand, $\HH$ and $\QH$ are algebraically stable by $h_{\HH}$~\cite{renz1999maximal}.
Since, on the other hand, for any $\conv$-consistent relation $R\in\S$,
$H_{\S}\left(R\right)$ is $\conv$-consistent (by Lemma~\ref{lem:voisinage-stable}).
Algebraically consistent networks over $\Base\cup\left\{ \left(\vide,\ldots,\vide\right)\right\} $
are satisfiable ($\TRCCHv$ is complete). By the refinement theorem,
algebraically consistent networks over $\S$ are satisfiable. 
\end{proof}

Satisfiability of algebraically consistent networks is, in general,
a weaker property than algebraic tractability. It is not equivalent
for subclasses that are not $\conv$-closed. Indeed, applying the
algebraic closure on a network over a subclass which is not $\conv$-closed
can move the network out of the subclass. In that case, we cannot
therefore conclude that the network is satisfiable (if it is not trivially
inconsistent). The previous subclasses are not $\conv$-closed: the
projection of $\ntpp$, $\vois\ntpp=\tpp\cup\ntpp\cup\eq$ does not
belong to $\QH$. 

We end this section by showing that the subclasses of the following
particular forms $\left(\QH\times\Hntpp\right)^{\star}$, $\left(\Hntpp\times\QH\right)^{\star}$,
$\Hntpp\times\left(\QH\times\Hntpp\right)^{\star}$, and $\QH\times\left(\Hntpp\times\QH\right)^{\star}$
 are algebraically tractable. 
\begin{proposition}
\label{prop:classes-traitables-points}Let $\R$ be a topological
region space such that $\TRCCHv$ is a complete sequential formalism.
Let $\S$ be a subset of $\TRCCHv$ satisfying one of the two following
properties: 
\begin{itemize}
\item $\S_{i}=\Hntpp$ if $i$ is even and $\S_{i}=\QH$ otherwise, for
all $i\in\I$, 
\item $\S_{i}=\Hntpp$ if $i$ is odd and $\S_{i}=\QH$ otherwise, for all
$i\in\I$.
\end{itemize}
The subclass $\S$ is algebraically tractable. 
\end{proposition}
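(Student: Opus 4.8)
The plan is to invoke the refinement theorem of~\cite{cohen2017checking}, exactly as in the proof of Proposition~\ref{prop:vois-acoherent-satisfiable}, but now we additionally need the subclass to be $\conv$-closed so that algebraic closure cannot move a network out of $\S$. The first step is to check that $\S$ is indeed a subclass: since $\S$ is Cartesian, this reduces to checking that each slice $\S_i\in\{\QH,\Hntpp\}$ is closed under intersection, composition, and converse. For $\QH$ this is known~\cite{renz1999maximal}, and for $\Hntpp$ it is Lemma~\ref{lem:ss-classe}. So $\S$ is a subclass.

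Next I would verify that $\S$ is $\conv$-closed. By definition of the projections of $\TRCCHv$, the projection closure of a relation $R$ replaces each $R_j$ by $R_j\cap\bigcap_{i\neq j}\conv_i^j R_i$; since $\conv_i^j b=\vois b$ when $|i-j|=1$ and $\B_{\RCCH}$ otherwise, the only non-trivial refinement of slice $j$ comes from $\vois R_{j-1}$ and $\vois R_{j+1}$. Now in both forms of $\S$ the slices strictly alternate between $\QH$ and $\Hntpp$, so for each $j$ the neighbours $R_{j\pm1}$ lie in the \emph{other} of the two classes. By Lemma~\ref{lem:voisinage-clos}, $\vois$ maps $\QH$ into $\Hntpp$ and $\Hntpp$ into $\QH$; hence $\vois R_{j\pm1}$ lies in $\S_j$, and since $\S_j$ is closed under intersection, the refined slice $R_j\cap\bigcap_{i}\conv_i^j R_i$ still lies in $\S_j$. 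As this holds at every step of the fixed-point computation, $\conv(R)\in\S$, so $\S$ is $\conv$-closed.

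It then remains to apply the refinement theorem. The refinement $H_\S$ sends each relation of $\S$ to a basic relation or the empty relation, because $h_{\HH}$ maps $\HH\cup\QH$ into $\Base_{\RCCH}\cup\{\vide\}$ and $h_{\CH}$ maps $\CH$ into $\Base_{\RCCH}\cup\{\vide\}$~\cite{renz1999maximal} (here only the $h_{\HH}$ branch is used since every slice is in $\QH$ or $\Hntpp\subseteq\HH$). For algebraic stability of $\S$ by $H_\S$: closure of each slice under $h_{\HH}$ at the level of classical $\RCCH$ networks is the stability of $\HH$ and $\QH$ by $h_{\HH}$~\cite{renz1999maximal}, and the interaction between neighbouring slices, namely that $H_\S(R)$ is $\conv$-consistent whenever $R$ is, is precisely Lemma~\ref{lem:voisinage-stable} (applied to each consecutive pair of slices, which by the alternating pattern is always one of $\QH\times\QH$, $\QH\times\Hntpp$, or $\Hntpp\times\QH$ — note $\Hntpp\subseteq\HH$, so $\QH\times\Hntpp$ and $\Hntpp\times\QH$ are covered by the $\QH\times\HH$ and $\HH\times\QH$ cases of that lemma). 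Finally, algebraically consistent scenarios over $\Base\cup\{(\vide,\ldots,\vide)\}$ are satisfiable because $\TRCCHv$ is a complete sequential formalism. The refinement theorem then yields that every algebraically consistent network over $\S$ is satisfiable; combined with $\conv$-closedness of $\S$, this gives algebraic tractability.

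The main obstacle is making sure the hypotheses of the refinement theorem are met at the level of $m$-ary relations rather than just pairwise: the theorem's conditions (slicing/refinement) require that the per-slice refinement $h_{\HH}$ be compatible with projections across \emph{all} neighbouring indices simultaneously, not merely for $m=2$. This is handled by the alternating structure — every adjacent pair of slices is one of the three two-slice subclasses treated in Lemmas~\ref{lem:voisinage-clos} and~\ref{lem:voisinage-stable} — so the $m=2$ analysis lifts; but one must be careful that after refining one slice the $\conv$-consistency of the whole relation is preserved, which is exactly why Lemma~\ref{lem:voisinage-stable} is phrased in terms of $\conv$-consistency and why we first established that $\S$ is $\conv$-closed. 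The rest is bookkeeping, verifying that the two explicit parity patterns indeed produce strictly alternating slices so that neither Lemma ever needs a $\Hntpp\times\Hntpp$ or forbidden combination.
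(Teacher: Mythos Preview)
Your proposal is correct and follows essentially the same route as the paper: apply the refinement theorem of~\cite{cohen2017checking}, checking first-part conditions exactly as in Proposition~\ref{prop:vois-acoherent-satisfiable} (via Lemma~\ref{lem:voisinage-stable} and the stability of $\HH,\QH$ by $h_{\HH}$), and second-part conditions by showing $\S$ is a Cartesian subclass (Lemma~\ref{lem:ss-classe} and~\cite{renz1999maximal}) and is $\conv$-closed (Lemma~\ref{lem:voisinage-clos}). Your write-up simply spells out in more detail how the alternation makes Lemmas~\ref{lem:voisinage-clos} and~\ref{lem:voisinage-stable} apply to every adjacent pair of slices; the only minor slip is listing $\QH\times\QH$ among the possible adjacent pairs, which cannot occur under strict alternation (though it is harmless since that case is also covered by Lemma~\ref{lem:voisinage-stable}).
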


\begin{proof}
Let $\S$ be a subset of $\TRCCHv$ as described in the statement.
$\S$ satisfies the conditions of the first part of the refinement
theorem (see the proof of Proposition~\ref{prop:vois-acoherent-satisfiable}).
$\S$ also satisfies the conditions of the second part. Indeed, on
the one hand, $\S$ is a subclass (since $\S$ is Cartesian and each
$\S_{i}$ is a subclass~\cite{renz1999maximal} ; Lemma~\ref{lem:ss-classe}).
On the other hand, $\S$ is $\conv$-closed (by Lemma~\ref{lem:voisinage-clos}
and since $\S$ is a Cartesian subclass). $\S$ is thus algebraically
tractable~(refinement theorem~\cite{cohen2017checking}). 
\end{proof}

Note that the tractable subclasses identified by Proposition~\ref{prop:classes-traitables-points}
do not contain all the basic relations (thus, they are not subalgebras).

\section{Topological Sequences on a Partition of Time\label{sec:-temporaliser-sur-partition}}

We have shown that there is no algebraically tractable Cartesian subalgebra
in the context of $\TRCCHv$, the context of regions described at
different time points between which there are no intermediary relations
(i.e. at time points which characterize all the qualitative changes).
Does this mean that there are no large tractable subclasses in the
context of (topological) temporal sequences? We show that this is
not the case, by considering topological temporal sequences describing
the evolution of regions on a time partition (i.e. on a contiguous
alternation of instants and open intervals).

\subsection{Formalization}

We begin by defining the formalism of topological temporal sequences
on a partition of time, which we denote $\TRCCHd$. Without loss of
generality, we consider only the partitions of the interval $[t_{0},t_{l}[$
of the form $\left(t_{0},]t_{0},t_{1}[,\ldots,t_{l-1},]t_{l-1},t_{l}[\right)$
with $m=2l$, $t_{i-1}<t_{i}$ for all $i\in\left\{ 1,\ldots,l\right\} $,
$t_{i}\in\mathbb{R}$ for all $i\in\left\{ 0,\ldots,l\right\} $,
and $l\in\mathbb{N}^{*}$. Thus, the sequences of $\TRCCHd$ describe
the topological relations at each time periods $t_{i}$ and during
each interval $]t_{i},t_{i+1}[$. 
\begin{definition}
Let $(\R,T)$ be a topological region space\emph{.} Let $t_{0},\ldots,t_{l}\in\mathbb{R}$
be consecutive instants.

The formalism $\TRCCHd$ (associated with $(\R,T)$) is the triplet
$\left(\A,\U,\varphi\right)$ where:
\begin{itemize}
\item $\U$ is the set of continuous functions from $[t_{1},t_{m}[$ to
the set of regions $\R$, 
\item $\A$ is the multi-algebra whose Cartesian product is $\RCCAH^{m}$
and whose projections satisfy $\conv_{i}^{j}b=\upconv b$ if $\left|j-i\right|=1$
and $i$ is even, $\conv_{i}^{j}b=\downconv b$ if $\left|j-i\right|=1$
and $i$ is odd, and $\conv_{i}^{j}b=\B_{\RCCH}$ otherwise, for all
$b\in\Base_{\RCCH}$ and $i,j\in\I$ with $\upconv$ and $\downconv$
defined by Table~\ref{tab:Relations-dominantes-et}, and 
\item $\varphi$ is the function from $\A$ to $2^{\U\times\U}$ such that
for all $R\in\A$, $\varphi\left(R\right)$ is the set of pairs of
functions $\left(f,f'\right)\in\U\times\U$ satisfying at each instant
$t_{i}$ the relation $R_{2i+1}$ (i.e. $\forall i\in\left\{ 0,\ldots,l-1\right\} \ \left(f\left(t_{i}\right),f'\left(t_{i}\right)\right)\in\varphi_{\RCCH}\left(R_{2i+1}\right)$)
and satisfying, for each $i\in\left\{ 0,\ldots,l-1\right\} $, one
(and only one) basic relation $\ b\subseteq R_{2i+2}$ at each instant
between $t_{i}$ and $t_{i+1}$ (i.e. $\exists b\in\Base_{\RCCH}\ b\subseteq R_{2i+2}\ \forall t\in]t_{i},t_{i+1}[\ \left(f\left(t\right),f'\left(t\right)\right)\in\varphi_{\RCCH}\left(b\right)$),
with $\varphi_{\RCCH}$ the interpretation function of $\RCCH$.
\end{itemize}
\end{definition}

\begin{remark}
The operator $\upconv$ encodes the dominance graph of $\RCCH$ described
in Figure~\ref{fig:Graphe-de-voisinage}.b, i.e. the possible evolutions
of relations being satisfied during an open interval ($\upconv$ returns
the corresponding relations possibly satisfied at the limits of the
interval). The operators $\upconv$ and $\downconv$ enforces continuity
(when $\TRCCHd$ is a complete sequential formalism). Remark~\ref{sec:Discussion-sur-la}
on $\TRCCHv$ also applies to $\TRCCHd$. 
\end{remark}

\begin{example}
An example of relations of $\TRCCHd$, with $m=4$, is the relation
$R=\left(\tpp\cup\ntpp,\po\cup\eq,\ec\cup\dc,\dc\right)$. This relation
means that the first region is included in the second at the instant
$t_{0}$ ($R_{1}$ is satisfied), then they overlap during the interval
$]t_{0},t_{1}[$ or are equal during the interval $]t_{0},t_{1}[$
($R_{2}$ is satisfied), they are disjoined at the instant $t_{1}$
($R_{3}$ is satisfied), and finally they are disconnected at every
instant of $]t_{1},t_{2}[$ ($R_{4}$ is satisfied). The only satisfiable
basic relation included in $R$ is $\left(\tpp,\po,\ec,\dc\right)=\conv\left(R\right)$.

\begin{table}
\begin{centering}
{\footnotesize{}{}}%
\begin{tabular}{|c|c|c|}
\hline 
{\footnotesize{}{}$b$} & {\footnotesize{}{}$\upconv b$} & {\footnotesize{}{}$\downconv b$}\tabularnewline
\hline 
\hline 
{\footnotesize{}{}$\dc$} & {\footnotesize{}{}$\dc\cup\ec$} & {\footnotesize{}{}$\dc$}\tabularnewline
\hline 
{\footnotesize{}{}$\ec$} & {\footnotesize{}{}$\ec$} & {\footnotesize{}{}$\dc\cup\ec\cup\po$}\tabularnewline
\hline 
{\footnotesize{}{}$\po$} & {\footnotesize{}{}$\ec\cup\po\cup\tpp\cup\tppi\cup\eq$} & {\footnotesize{}{}$\po$}\tabularnewline
\hline 
{\footnotesize{}{}$\tpp$} & {\footnotesize{}{}$\tpp\cup\eq$} & {\footnotesize{}{}$\po\cup\tpp\cup\ntpp$}\tabularnewline
\hline 
{\footnotesize{}{}$\ntpp$} & {\footnotesize{}{}$\ntpp\cup\tpp\cup\eq$} & {\footnotesize{}{}$\ntpp$}\tabularnewline
\hline 
{\footnotesize{}{}$\eq$} & {\footnotesize{}{}$\eq$} & {\footnotesize{}{}$\B_{\RCCH}\backslash\left(\dc\cup\ec\right)$}\tabularnewline
\hline 
\end{tabular}
\par\end{centering}
\caption{\label{tab:Relations-dominantes-et}Dominant and dominated relations
of the basic relations of $\protect\RCCH$.}
\end{table}
\end{example}

\subsection{Tractability Results}

We now identify large algebraically tractable Cartesian subalgebras,
by applying again the refinement theorem. We begin by showing its
conditions. 
\begin{lemma}
\label{lem:dominance-stable}Let $\S\in\{\HH\times\HH,\HH\times\QH,\QH\times\HH,\QH\times\QH,\HH\times\CH,\QH\times\CH\}$
be a subclass of $\TRCCHd$ $(m=2)$ and $R\in\S$.

If $R$ is $\conv$-consistent then $H_{\S}\left(R\right)$ is $\conv$-consistent. 
\end{lemma}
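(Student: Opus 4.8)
The plan is to follow exactly the pattern of the proof of Lemma~\ref{lem:voisinage-stable}, with the dominance projections $\upconv,\downconv$ of $\TRCCHd$ (Table~\ref{tab:Relations-dominantes-et}) replacing the neighbourhood projection $\vois$. Fix $\S$ as in the statement and a $\conv$-consistent $R\in\S$. Since $\S_1\in\{\HH,\QH\}$, the first component of $H_\S$ is always $h_{\HH}$, so write $b_1:=H_\S(R)_1=h_{\HH}(R_1)$ and $b_2:=H_\S(R)_2$, where $b_2=h_{\HH}(R_2)$ if $\S_2\in\{\HH,\QH\}$ and $b_2=h_{\CH}(R_2)$ if $\S_2=\CH$. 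As $R$ is not trivially unsatisfiable, $R_1,R_2\neq\vide$; since $R_1\in\HH\cup\QH$ and $R_2\in\HH\cup\QH\cup\CH$, the results of~\cite{renz1999maximal} quoted above give $b_1,b_2\in\Base_{\RCCH}$, so $H_\S(R)$ is not trivially unsatisfiable. For $m=2$ the projections are $\conv_1^2=\downconv$ and $\conv_2^1=\upconv$, and for basic relations $b\subseteq\upconv b'\Leftrightarrow b'\subseteq\downconv b$ (both say that $b$ dominates $b'$); hence $H_\S(R)=(b_1,b_2)$ is $\conv$-closed iff $b_2\subseteq\downconv b_1$. So the whole proof reduces to showing $b_2\in\downconv b_1$, i.e.\ that $b_1$ dominates $b_2$ in the graph of Figure~\ref{fig:Graphe-de-voisinage}.b.

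Next I would record the two consequences of $\conv$-closedness of $R$ that drive the analysis. First, from $R_1\subseteq\upconv R_2$ and $b_1\in R_1$ we get $b_1\in\upconv R_2$, hence $R_2\cap\downconv b_1\neq\vide$ (a lower bound on $R_2$). Second, $R_2\subseteq\downconv R_1$ is an upper bound; to exploit it, I would note that the shape of $b_1=h_{\HH}(R_1)$ constrains which basic relations lie in $R_1$: $b_1=\po$ forces $\dc,\ec\notin R_1$; $b_1=\tpp$ together with $R_1\in\HH\cup\QH\subseteq\Prcc$ forces $R_1\subseteq\tpp\cup\ntpp\cup\eq$ (otherwise $R_1$ would be an $\Nrcc$-relation, cf.\ Table~\ref{tab:Definitions-de-Q8,H8,C8}), and symmetrically for $\tppi$; and $b_1\in\{\ntpp,\ntppi,\eq\}$ forces $R_1=\{b_1\}$ (since $h_{\HH}(R_1)\in\Base_{\RCCH}$ and no higher-priority basic relation belongs to $R_1$). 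Combining these with Table~\ref{tab:Relations-dominantes-et} bounds $\downconv R_1$, hence $R_2$; finally, $R_2\in\Prcc$ (the $\Nrcc$-exclusion, plus the $\CH$-axiom when $\S_2=\CH$) pins down the priority pick $b_2$.

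The case analysis on $b_1$ then runs as follows. If $b_1=\dc$: by Table~\ref{tab:Relations-dominantes-et}, $\dc\subseteq\upconv b'$ only for $b'=\dc$, so $\dc\in R_2$ and $b_2=\dc\in\downconv\dc$. If $b_1=\ec$: $R_2$ meets $\downconv\ec=\dc\cup\ec\cup\po$, which is downward-closed for the $h_{\HH}$-priority order, so $b_2\in\{\dc,\ec,\po\}$ when $\S_2\in\{\HH,\QH\}$; when $\S_2=\CH$, if $b_2\notin\{\dc,\po\}$ the $\CH$-axiom forces $R_2=\{\ec\}$, again giving $b_2\in\downconv\ec$. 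If $b_1=\po$: then $\dc,\ec\notin R_1$, so $\dc,\ec\notin\downconv R_1\supseteq R_2$, while $\po\subseteq\upconv R_2$ forces $\po\in R_2$; hence $b_2=\po\in\downconv\po$. If $b_1\in\{\tpp,\tppi\}$ (converse-symmetric cases): $R_1\subseteq\tpp\cup\ntpp\cup\eq$ (resp.\ $\tppi\cup\ntppi\cup\eq$) gives $R_2\subseteq\B_{\RCCH}\setminus(\dc\cup\ec)$, and $b_1\subseteq\upconv R_2$ forces $R_2$ to meet $\{\po,\tpp,\ntpp\}$ (resp.\ $\{\po,\tppi,\ntppi\}$); the $\Nrcc$-exclusion on $R_2$ then prevents the priority pick from landing on a wrong-orientation relation or on $\eq$, leaving $b_2\in\{\po,\tpp,\ntpp\}=\downconv\tpp$ (resp.\ $=\downconv\tppi$). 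If $b_1\in\{\ntpp,\ntppi\}$: $R_1=\{b_1\}$ and $\downconv b_1=\{b_1\}$, so $R_2=\{b_1\}$ and $b_2=b_1\in\downconv b_1$. If $b_1=\eq$: $R_1=\{\eq\}$, so $R_2\subseteq\downconv\eq=\B_{\RCCH}\setminus(\dc\cup\ec)$; since $\dc,\ec\notin R_2$, the pick $b_2$ avoids $\dc,\ec$, hence $b_2\in\downconv\eq$. In every case $b_2\in\downconv b_1$, so $H_\S(R)$ is $\conv$-closed and not trivially unsatisfiable, i.e.\ $\conv$-consistent.

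I expect the cases $b_1\in\{\tpp,\tppi\}$ to be the main obstacle: there, neither the lower bound $R_2\cap\downconv b_1\neq\vide$ nor the upper bound $R_2\subseteq\downconv R_1$ alone determines the highest-priority element of $R_2$, and one genuinely has to combine both with the fact that $R_2$ is not an $\Nrcc$-relation, and separately track the $h_{\CH}$ priority order when $\S_2=\CH$ (where $\ec$ and $\eq$, rather than $\ntpp,\ntppi,\eq$, are the leftover relations). All other cases are short once the $b_1$-shape constraints of the second paragraph and Table~\ref{tab:Relations-dominantes-et} are in hand.
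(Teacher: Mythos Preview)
Your proposal is correct and follows essentially the same route as the paper: a case analysis on $b_1=h_{\HH}(R_1)$ (equivalently, on the first priority relation contained in $R_1$), using $R_1\subseteq\upconv R_2$ to force a suitable basic relation into $R_2$ and $R_2\subseteq\downconv R_1$ together with $R_2\in\Prcc$ (and the $\HH/\QH/\CH$ axioms) to rule out the unwanted ones. Your explicit reduction to the single condition $b_2\subseteq\downconv b_1$ via the duality $b\subseteq\upconv b'\Leftrightarrow b'\subseteq\downconv b$ is a slightly cleaner packaging of the same argument, but the substantive case work coincides with the paper's.
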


\begin{proof}
From the definitions of $\HH$, $\QH$, $\CH$, $h_{\HH}$, $h_{\CH}$,
and the projections of $\TRCCHd$, we derive the lemma. Let $\S\in\{\HH\times\HH,\HH\times\QH,\QH\times\HH,\QH\times\QH,\HH\times\CH,\QH\times\CH\}$
and $R\in\S$ be a $\conv$-consistent relation. We show that $H_{\S}\left(R\right)$
is $\conv$-consistent. 
\begin{itemize}
\item If $\dc\subseteq R_{1}$ then $\dc\subseteq R_{2}$ and therefore
$H_{\S}\left(R\right)=\left(\dc,\dc\right)$. 
\item Otherwise, if $\ec\subseteq R_{1}$ then $R_{2}\cap\left(\dc\cup\ec\cup\po\right)\neq\vide$. 
\begin{itemize}
\item If $\S_{2}\in\left\{ \HH,\QH\right\} $ then $H_{\S}\left(R\right)_{1}=\ec$
and $H_{\S}\left(R\right)_{2}\subseteq\dc\cup\ec\cup\po$. 
\item Suppose $\S_{2}=\CH$. 
\begin{itemize}
\item If $\dc\nsubseteq R_{2}$ and $\po\nsubseteq R_{2}$ then $\ec\subseteq R_{2}$
and therefore $R_{2}=\ec$ (since $R_{2}\in\CH$). Thus, $H_{\S}\left(R\right)=\left(\ec,\ec\right)$ 
\item If $\dc\subseteq R_{2}$ or $\po\subseteq R_{2}$ then $H_{\S}\left(R\right)_{1}=\ec$
and $H_{\S}\left(R\right)_{2}\subseteq\dc\cup\po$. 
\end{itemize}
\end{itemize}
\item Otherwise, if $\po\subseteq R_{1}$ then $\po\subseteq R_{2}$ and
$R_{2}\cap\left(\dc\cup\ec\right)=\vide$ (since $R_{1}\cap\left(\dc\cup\ec\right)=\vide$).
Therefore, $H_{\S}\left(R\right)=\left(\po,\po\right)$. 
\item Otherwise, if $\tpp\subseteq R_{1}$ then $R_{2}\cap\left(\tpp\cup\ntpp\cup\po\right)\neq\vide$
and $R_{2}\cap\left(\dc\cup\ec\right)=\vide$. 
\begin{itemize}
\item Suppose $\S_{2}\in\left\{ \HH,\QH\right\} $. 
\begin{itemize}
\item If $\po\subseteq R_{2}$ or $\tpp\subseteq R_{2}$ then $H_{\S}\left(R\right)=\left(\tpp,\po\right)$
or $H_{\S}\left(R\right)=\left(\tpp,\tpp\right)$. 
\item If $\po\nsubseteq R_{2}$ and $\tpp\nsubseteq R_{2}$ then $\ntpp\subseteq R_{2}$
and therefore $R_{2}=\ntpp$ (since $R_{2}\in\HH\cup\QH$). Thus,
$H_{\S}\left(R\right)=\left(\tpp,\ntpp\right)$. 
\end{itemize}
\item Suppose $\S_{2}=\CH$. 
\begin{itemize}
\item If $\po\subseteq R_{2}$ or $\ntpp\subseteq R_{2}$ then $H_{\S}\left(R\right)=\left(\tpp,\po\right)$
or $H_{\S}\left(R\right)=\left(\tpp,\ntpp\right)$. 
\item If $\po\nsubseteq R_{2}$ and $\ntpp\nsubseteq R_{2}$ then $\tpp\subseteq R_{2}$
and therefore $R_{2}\subseteq\tpp\cup\eq$ ($R_{2}\in\CH$). Thus,
$H_{\S}\left(R\right)=\left(\tpp,\tpp\right)$. 
\end{itemize}
\end{itemize}
\item Otherwise, if $\tppi\subseteq R_{1}$ then $R_{2}\cap\left(\tppi\cup\ntppi\cup\po\right)\neq\vide$
and $R_{2}\cap\left(\dc\cup\ec\right)=\vide$. 
\begin{itemize}
\item If $\po\subseteq R_{2}$ then $H_{\S}\left(R\right)=\left(\tppi,\po\right)$. 
\item If $\po\nsubseteq R_{2}$ then $R_{2}\subseteq\tppi\cup\ntppi\cup\eq$
($R_{2}\in\HH\cup\QH\cup\CH$). 
\begin{itemize}
\item If $\S_{2}\in\left\{ \HH,\QH\right\} $ and $\tppi\subseteq R_{2}$,
we have $H_{\S}\left(R\right)=\left(\tppi,\tppi\right)$. 
\item If $\S_{2}\in\left\{ \HH,\QH\right\} $ and $\tppi\nsubseteq R_{2}$,
we have $\eq\nsubseteq R_{2}$ (since $R_{2}\in\HH\cup\QH$). Therefore,
$H_{\S}\left(R\right)=\left(\tppi,\ntppi\right)$. 
\item If $\S_{2}\in\CH$, since $R_{2}\cap\left(\tppi\cup\ntppi\right)\neq\vide$,
we have $H_{\S}\left(R\right)=\left(\tppi,\ntppi\right)$ or $H_{\S}\left(R\right)=\left(\tppi,\tppi\right)$. 
\end{itemize}
\end{itemize}
\item Otherwise, $R_{1}\subseteq\ntpp\cup\ntppi\cup\eq$ and $R_{2}\subseteq\po\cup\tpp\cup\ntpp\cup\tppi\cup\ntppi\cup\eq$. 
\begin{itemize}
\item If $\ntpp\subseteq R_{1}$ then $R_{1}=\ntpp$ ($R_{1}\in\HH\cup\QH$)
and therefore $R_{2}=\ntpp$. Thus, $R=H_{\S}\left(R\right)=\left(\ntpp,\ntpp\right)$. 
\item If $\ntppi\subseteq R_{1}$ then, similarly, $R=H_{\S}\left(R\right)=\left(\ntppi,\ntppi\right)$. 
\item Otherwise, $\eq=R_{1}$. Therefore, $H_{\S}\left(R\right)_{1}=\eq$
and $H_{\S}\left(R\right)_{2}\subseteq\po\cup\tpp\cup\ntpp\cup\tppi\cup\ntppi\cup\eq$. 
\end{itemize}
\end{itemize}
\end{proof}

Note that, as in the context of $\TRCCHv$, algebraically consistent
networks over most combinations of the subalgebras $\QH$ and $\HH$,
but \emph{also} of $\CH$, are satisfiable. 
\begin{proposition}
\label{prop:dominance-reseau-acoherent-satisfiable}Let $\R$ be a
topological region space such that $\TRCCHd$ is a complete sequential
formalism. Let $\S$ be a subset of $\TRCCHd$ satisfying $S_{2i-1}\in\left\{ \HH,\QH\right\} $
and $\S_{2i}\in\left\{ \HH,\QH,\CH\right\} $ for all $i\in\left\{ 1,\ldots,l\right\} $.

Algebraically consistent networks over $\S$ are satisfiable. 
\end{proposition}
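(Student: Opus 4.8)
The plan is to follow the proof of Proposition~\ref{prop:vois-acoherent-satisfiable} almost verbatim, replacing Lemma~\ref{lem:voisinage-stable} by Lemma~\ref{lem:dominance-stable}: I would apply the refinement theorem of~\cite{cohen2017checking} with the refinement $H_{\S}$. First I would check that $H_{\S}$ is a refinement from $\S$ to $\Base\cup\{(\vide,\ldots,\vide)\}$. By definition of $H_{\S}$, on slice $i$ it applies $h_{\HH}$ whenever $\S_{i}\subseteq\HH$ or $\S_{i}\subseteq\QH$ (in particular for every odd $i$) and $h_{\CH}$ when $\S_{i}=\CH$ (which forces $i$ to be even). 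Since $h_{\HH}$ sends every nonempty relation of $\HH\cup\QH$ into $\Base_{\RCCH}$ and $h_{\CH}$ sends every nonempty relation of $\CH$ into $\Base_{\RCCH}$~\cite{renz1999maximal}, while $h_{\HH}(\vide)=h_{\CH}(\vide)=\vide$, the image $H_{\S}(R)$ is always a tuple of basic relations, or $(\vide,\ldots,\vide)$ as soon as one of its components is empty.

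The core of the argument is that $\S$ is algebraically stable by $H_{\S}$, which I would deduce from two facts. First, each slice $\S_{i}$ is algebraically stable, in the classical $\RCCH$ sense, by the refinement used on it: $\HH$ and $\QH$ by $h_{\HH}$, and $\CH$ by $h_{\CH}$~\cite{renz1999maximal}. Second, for every $\conv$-consistent relation $R\in\S$, the relation $H_{\S}(R)$ is $\conv$-consistent. For this second fact I would use that the only nontrivial projections of $\TRCCHd$ link consecutive slices, so that for a tuple $R=(R_{1},\ldots,R_{m})$ with no empty component, being $\conv$-closed is equivalent to each consecutive pair $(R_{i},R_{i+1})$ being $\conv$-closed in the two-slice formalism; since moreover $H_{\S}$ acts componentwise, it is enough to treat a single consecutive pair. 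If $i$ is odd, the pair lies in $\S_{i}\times\S_{i+1}$ with $\S_{i}\in\{\HH,\QH\}$ and $\S_{i+1}\in\{\HH,\QH,\CH\}$, i.e. in one of the six subclasses of Lemma~\ref{lem:dominance-stable}, which then applies directly. If $i$ is even, the pair of projections $(\conv_{i}^{i+1},\conv_{i+1}^{i})=(\downconv,\upconv)$ is the reverse of the pair $(\conv_{1}^{2},\conv_{2}^{1})=(\upconv,\downconv)$ of $\TRCCHd$ with $m=2$; reading the pair backwards as $(R_{i+1},R_{i})\in\S_{i+1}\times\S_{i}$ — again one of the six subclasses, since its first component now lies in $\{\HH,\QH\}$ and its second in $\{\HH,\QH,\CH\}$ — puts us back into the hypotheses of Lemma~\ref{lem:dominance-stable}, and since $H_{\S}$ acts componentwise this reordering is harmless. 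In every case $H_{\S}(R)$ is $\conv$-consistent.

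To conclude, I would note that algebraically consistent networks over $\Base\cup\{(\vide,\ldots,\vide)\}$ are satisfiable: such a network has all its relations in $\Base$ (the all-empty tuple is excluded by algebraic consistency), hence is an algebraically closed scenario, hence satisfiable since $\TRCCHd$ is a complete sequential formalism. The refinement theorem then gives that algebraically consistent networks over $\S$ are satisfiable.

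The step I expect to require the most care is the preservation of $\conv$-consistency, and within it the even-index case: because $\upconv$ and $\downconv$ differ, a consecutive pair whose left index is even behaves like the mirror image of a two-slice relation rather than a two-slice relation itself, while Lemma~\ref{lem:dominance-stable} is stated only for the canonical orientation, so the reordering has to be justified (here the fact that $H_{\S}$ acts componentwise makes it painless). The remaining verifications are a routine transcription of the corresponding $\TRCCHv$ argument.
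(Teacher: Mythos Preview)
Your proposal is correct and follows essentially the same approach as the paper's own proof: apply the refinement theorem with $H_{\S}$, check that $H_{\S}$ lands in $\Base\cup\{(\vide,\ldots,\vide)\}$, establish algebraic stability via the slice-wise stability of $\HH,\QH,\CH$ together with Lemma~\ref{lem:dominance-stable}, and conclude by completeness of $\TRCCHd$. Your explicit treatment of the even-index pairs (reading $(R_{i+1},R_{i})$ so as to match the orientation of Lemma~\ref{lem:dominance-stable}) makes precise a reduction that the paper leaves implicit, but the argument is the same.
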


\begin{proof}
We apply the refinement theorem~\cite{cohen2017checking}. $H_{\S}$
is a refinement from $\S$ to the set $\Base\cup\left\{ \left(\vide,\ldots,\vide\right)\right\} $
(since $h_{\HH}$ (resp. $h_{\CH}$) is a refinement from $\HH\cup\QH$
(resp. $h_{\CH}$) to $\Base_{\RCCH}\cup\left\{ \vide\right\} $~\cite{renz1999maximal}).
$\S$ is algebraically stable by $H_{\S}$. Indeed, since on the one
hand, $\HH$ (resp. $\QH$ ; resp. $\CH$) is algebraically stable
by $h_{\HH}$ (resp. $h_{\HH}$ ; resp. $h_{\CH}$)~\cite{renz1999maximal}.
Since, on the other hand, for any $\conv$-consistent relation $R\in\S$,
$H_{\S}\left(R\right)$ is $\conv$-consistent (by Lemma~\ref{lem:dominance-stable}).
Algebraically consistent networks over $\Base\cup\left\{ \left(\vide,\ldots,\vide\right)\right\} $
are satisfiable ($\TRCCHd$ is complete). By the refinement theorem,
algebraically consistent networks over $\S$ are satisfiable. 
\end{proof}

\begin{lemma}
\label{lem:dominance-clos}Let $r\in\RCCAH\backslash\Nrcc$. We have: 
\begin{itemize}
\item $\upconv r\in\HH$, 
\item $\downconv r\in\HH\cap\QH\cap\CH$. 
\end{itemize}
\end{lemma}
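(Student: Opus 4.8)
The plan is to reduce everything to reading off the two projection tables, using only the projection axioms $\conv(r\cup r')=\conv r\cup\conv r'$ and $\conv(\inv r)=\inv{\conv r}$. First I would complete Table~\ref{tab:Relations-dominantes-et} to all eight basic relations by setting $\upconv\tppi=\tppi\cup\eq$, $\upconv\ntppi=\ntppi\cup\tppi\cup\eq$, $\downconv\tppi=\po\cup\tppi\cup\ntppi$, $\downconv\ntppi=\ntppi$ (the converses of the rows for $\tpp$, $\ntpp$), and note that $\upconv r=\bigcup_{b\in\Base_{\RCCH},\,b\subseteq r}\upconv b$ and likewise for $\downconv$. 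From the tables I would extract the few membership criteria actually needed: $\po\subseteq\upconv r\iff\po\subseteq r$; $\ntpp\subseteq\upconv r\iff\ntpp\subseteq r$ (and its $\ntppi$ analogue); $\upconv r\cap(\tpp\cup\ntpp)\neq\vide\iff r\cap(\po\cup\tpp\cup\ntpp)\neq\vide$ (and its $\tppi/\ntppi$ analogue); $\ec\subseteq\downconv r\iff\ec\subseteq r$ with $\downconv\ec=\dc\cup\ec\cup\po$; and $\eq\subseteq\downconv r\iff\eq\subseteq r$ with $\downconv\eq=\B_{\RCCH}\backslash(\dc\cup\ec)$.

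For $\upconv r\in\HH$ I would check the three defining conditions of $\HH$ in turn. The two ``implication'' conditions hold for \emph{every} projected relation, not just for $r\notin\Nrcc$: if $\ntpp\cup\eq\subseteq\upconv r$ then in particular $\ntpp\subseteq\upconv r$, so $\ntpp\subseteq r$, so $\upconv\ntpp=\ntpp\cup\tpp\cup\eq\subseteq\upconv r$ and hence $\tpp\subseteq\upconv r$; the condition for $\ntppi,\tppi$ is the converse of this. For $\upconv r\in\Prcc$ I would first establish $\upconv r\in\Nrcc\iff r\in\Nrcc$: if $\po\subseteq\upconv r$ then $\upconv r\notin\Nrcc$ trivially, and if $\po\nsubseteq\upconv r$ then $\po\nsubseteq r$ and the two non-empty-intersection clauses of the definition of $\Nrcc$ applied to $\upconv r$ become, by the membership criteria, $r\cap(\tpp\cup\ntpp)\neq\vide$ and $r\cap(\tppi\cup\ntppi)\neq\vide$, i.e.\ the clauses for $r$ itself; since $r\notin\Nrcc$ this gives $\upconv r\notin\Nrcc$. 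Finally $\upconv r$ cannot be one of the four relations $r_1\cup\ec\cup r_2\cup\eq$ with $r_1\in\{\vide,\dc\}$, $r_2\in\{\ntpp,\ntppi\}$: if $r_2=\ntpp$ then $\ntpp\subseteq\upconv r$ forces $\tpp\subseteq\upconv r$ (as above), and if $r_2=\ntppi$ then $\tppi\subseteq\upconv r$, yet neither $\tpp$ nor $\tppi$ belongs to such a relation. Hence $\upconv r\in\Prcc$, and therefore $\upconv r\in\HH$.

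For $\downconv r\in\HH\cap\QH\cap\CH$ the observation that does all the work is that $\ec$ (resp.\ $\eq$) cannot lie in a dominated-relation set without $\po$ (resp.\ $\po,\tpp,\tppi$) lying there too. If $\ec\subseteq\downconv r$ then $\ec\subseteq r$, hence $\downconv\ec=\dc\cup\ec\cup\po\subseteq\downconv r$, so $\po\subseteq\downconv r$; this is exactly the $\CH$ condition (and it is vacuous when $\ec\nsubseteq\downconv r$). If $\eq\subseteq\downconv r$ then $\eq\subseteq r$, hence $\downconv\eq=\B_{\RCCH}\backslash(\dc\cup\ec)\subseteq\downconv r$, which contains $\po$, $\tpp$ and $\tppi$; this yields the $\QH$ condition and both implication conditions of $\HH$ (all vacuous when $\eq\nsubseteq\downconv r$). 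For $\Prcc$: either $\po\subseteq\downconv r$, and then $\downconv r\notin\Nrcc$; or $\po\nsubseteq\downconv r$, which by the table forces $r\subseteq\dc\cup\ntpp\cup\ntppi$, a set on which $\downconv$ acts as the identity, so $\downconv r=r\notin\Nrcc$. And $\downconv r=r_1\cup\ec\cup r_2\cup\eq$ is impossible since $\ec\subseteq\downconv r$ would force $\po\subseteq\downconv r$, while $\po$ lies in no such relation. Hence $\downconv r\in\Prcc$, so $\downconv r\in\HH\cap\QH\cap\CH$.

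The only real work is transcribing the two tables (including the two converse rows) and reading off the handful of membership criteria; once those are correct, each of the six subclass conditions collapses to a single implication. I therefore do not expect a genuine obstacle, only bookkeeping — the two points to get exactly right are the equivalence $\upconv r\in\Nrcc\iff r\in\Nrcc$ and the ``drag-along'' behaviour of $\ec$ and $\eq$ under $\downconv$.
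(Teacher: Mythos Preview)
Your proposal is correct and follows essentially the same approach as the paper: both arguments read off the projection tables, verify $\upconv r,\downconv r\notin\Nrcc$ and the four $\NPrcc\backslash\Nrcc$ exceptions, and then dispatch the $\HH$/$\QH$/$\CH$ implication clauses via the ``drag-along'' facts $\ntpp\subseteq\upconv r\Rightarrow\tpp\subseteq\upconv r$, $\ec\subseteq\downconv r\Rightarrow\po\subseteq\downconv r$, and $\eq\subseteq\downconv r\Rightarrow\po\cup\tpp\cup\tppi\subseteq\downconv r$. Your treatment of $\downconv r\notin\Nrcc$ is in fact a bit slicker than the paper's case split on $\eq\subseteq r$: observing that $\po\nsubseteq\downconv r$ forces $r\subseteq\dc\cup\ntpp\cup\ntppi$, where $\downconv$ is the identity, immediately gives $\downconv r=r\notin\Nrcc$.
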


\begin{proof}
From the definitions of $\HH$, $\QH$, $\CH$ and the projections
of $\TRCCHd$, we derive the lemma. Let $r\in\RCCAH\backslash\Nrcc$
(see Table~\ref{tab:Definitions-de-Q8,H8,C8}). On the one hand,
we show $\upconv r\in\HH$. For this, we show the three following
properties: $\upconv r\in\RCCAH\backslash\Nrcc$, $\ntpp\subseteq\upconv r\implique\tpp\subseteq\upconv r$,
and $\upconv r\notin V=\{\ec\cup\ntpp\cup\eq,\dc\cup\ec\cup\ntpp\cup\eq,\ec\cup\ntppi\cup\eq,\dc\cup\ec\cup\ntppi\cup\eq\}$.
We show $\upconv r\in\RCCAH\backslash\Nrcc$. If $\upconv r\cap\left(\tpp\cup\ntpp\right)=\vide$
or $\upconv r\cap\left(\tppi\cup\ntppi\right)=\vide$ then $\upconv r\in\RCCAH\backslash\Nrcc$.
Suppose $\upconv r\cap\left(\tpp\cup\ntpp\right)\neq\vide$ and $\upconv r\cap\left(\tppi\cup\ntppi\right)\neq\vide$.
Therefore, $r\cap\left(\po\cup\tpp\cup\ntpp\right)\neq\vide$ and
$r\cap\left(\po\cup\tppi\cup\ntppi\right)\neq\vide$. If $r\cap\left(\tpp\cup\ntpp\right)\neq\vide$
and $r\cap\left(\tppi\cup\ntppi\right)\neq\vide$ then $\po\subseteq r$
(since $r\in\RCCAH\backslash\Nrcc$). Thus, $\po\subseteq r$ and
therefore $\po\subseteq\upconv r$. We have $\upconv r\in\RCCAH\backslash\Nrcc$.
We show $\ntpp\subseteq\upconv r\implique\tpp\subseteq\upconv r$.
If $\ntpp\subseteq\upconv r$, then $\ntpp\subseteq r$ and therefore
$\upconv\ntpp\subseteq\upconv r$. Thus, $\tpp\subseteq\upconv r$.
We show $\upconv r\notin V$. If $\upconv r\cap\left(\ntpp\cup\ntppi\right)=\vide$
then $\upconv r\notin V$. If $\ntpp\subseteq\upconv r$, then $\tpp\subseteq\upconv r$.
If $\ntppi\subseteq\upconv r$ then $\tppi\subseteq\upconv r$. Thus,
in all cases, $\upconv r\notin V$.

On the other hand, we show $\downconv r\in\HH\cap\QH\cap\CH$. If
$\downconv r\cap\left(\tpp\cup\ntpp\right)=\vide$ or $\downconv r\cap\left(\tppi\cup\ntppi\right)=\vide$
then $\downconv r\in\RCCAH\backslash\Nrcc$. Suppose $\downconv r\cap\left(\tpp\cup\ntpp\right)\neq\vide$
and $\downconv r\cap\left(\tppi\cup\ntppi\right)\neq\vide$. We have
$r\cap\left(\tpp\cup\ntpp\cup\eq\right)\neq\vide$ and $r\cap\left(\tppi\cup\ntppi\cup\eq\right)\neq\vide$.
If $\eq\subseteq r$, then $\po\subseteq\downconv r$ and therefore
$\downconv r\in\RCCAH\backslash\Nrcc$. If $\eq\nsubseteq r$, then
$r\cap\left(\tpp\cup\ntpp\right)\neq\vide$ and $r\cap\left(\tppi\cup\ntppi\right)\neq\vide$.
Since $r\in\RCCAH\backslash\Nrcc$, $\po\subseteq r$. Thus, $\po\subseteq\downconv r$
and therefore $\downconv r\in\RCCAH\backslash\Nrcc$. Thus, in all
cases, $\downconv r\in\RCCAH\backslash\Nrcc$. Moreover, we have $\downconv r\in\Prcc$
since $\downconv r\notin V$. Indeed, if $\ec\subseteq\downconv r$,
then $\ec\subseteq r$ and therefore $\po\subseteq\downconv r$. By
the same argument, we have $\downconv r\in\CH$. In addition, we have
$\downconv r\in\HH\cap\QH$ and therefore $\downconv r\in\HH\cap\QH\cap\CH$,
since if $\eq\subseteq\downconv r$ then $\eq\subseteq r$ and therefore
$\po\cup\tpp\cup\tppi\subseteq\downconv r$. 
\end{proof}

We end by showing that the subalgebras of the form $\left(\HH\times\left\{ \HH,\QH,\CH\right\} \right)^{\star}$
are algebraically tractable. 
\begin{theorem}
Let $\R$ be a topological region space such that $\TRCCHd$ is a
complete sequential formalism.

Subalgebras $\S$ of $\TRCCHd$ satisfying $S_{2i-1}=\HH$ and $\S_{2i}\in\left\{ \HH,\QH,\CH\right\} $
for all $i\in\left\{ 1,\ldots,l\right\} $ are algebraically tractable. 
\end{theorem}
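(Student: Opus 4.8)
The plan is to apply the refinement theorem of~\cite{cohen2017checking}, exactly as in the proof of Proposition~\ref{prop:classes-traitables-points} but now in the $\TRCCHd$ setting. Concretely, I would verify that such an $\S$ satisfies both parts of the refinement theorem: the first part requires that $\S$ be algebraically stable by a suitable refinement $H_{\S}$ mapping into $\Base\cup\{(\vide,\ldots,\vide)\}$, and the second part requires that $\S$ be a $\conv$-closed Cartesian subclass. Since all the ingredients have already been assembled, the proof is mostly a matter of quoting the right earlier results in the right order.

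First I would handle the first part of the theorem, reusing the argument of Proposition~\ref{prop:dominance-reseau-acoherent-satisfiable}: the map $H_{\S}$ is a refinement from $\S$ into $\Base\cup\{(\vide,\ldots,\vide)\}$ because $h_{\HH}$ (resp. $h_{\CH}$) sends $\HH\cup\QH$ (resp. $\CH$) into $\Base_{\RCCH}\cup\{\vide\}$~\cite{renz1999maximal}, and $\S$ is algebraically stable by $H_{\S}$ because each slice algebra $\HH$, $\QH$, $\CH$ is algebraically stable by the corresponding $h$~\cite{renz1999maximal}, while Lemma~\ref{lem:dominance-stable} guarantees that $\conv$-consistency is preserved when passing from $R$ to $H_{\S}(R)$ (the two-slice cases of Lemma~\ref{lem:dominance-stable} are exactly the adjacent-index cases $\HH\times\{\HH,\QH,\CH\}$ that arise here). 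Completeness of $\TRCCHd$ then gives satisfiability of algebraically consistent scenarios over $\Base\cup\{(\vide,\ldots,\vide)\}$.

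Second, for the part of the refinement theorem that upgrades ``algebraically consistent networks are satisfiable'' to full algebraic tractability, I would check that $\S$ is a Cartesian subclass — it is Cartesian by hypothesis, and each slice $\HH$, $\QH$, $\CH$ is a subclass of $\RCCAH$~\cite{renz1999maximal}, so the product is a subclass of $\TRCCHd$ — and that $\S$ is $\conv$-closed. The $\conv$-closedness is where Lemma~\ref{lem:dominance-clos} is used: for an odd slice $i$, $\conv_i^{i\pm1}$ is $\downconv$, which sends $\RCCAH\backslash\Nrcc \supseteq \HH$ into $\HH\cap\QH\cap\CH$, hence into the relevant neighbouring slice (be it $\HH$, $\QH$ or $\CH$); for an even slice $i$, $\conv_i^{i\pm1}$ is $\upconv$, which sends $\HH\cup\QH\cup\CH \subseteq \RCCAH\backslash\Nrcc$ into $\HH$, i.e. into the neighbouring odd slice which is exactly $\HH$; and the non-adjacent projections are $\B_{\RCCH}$, which lies in every slice. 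So applying $\conv$ to any relation of $\S$ keeps it in $\S$, componentwise.

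The one point requiring a little care — and the closest thing to an obstacle — is matching the direction conventions of the projections: in $\TRCCHd$, $\conv_i^j=\upconv$ when $i$ is even and $\conv_i^j=\downconv$ when $i$ is odd (for $|j-i|=1$). So when checking $\conv$-closedness of slice $j$ one must look at $\conv_i^j r$ for the neighbour index $i$, whose parity is opposite to $j$'s, and confirm that Lemma~\ref{lem:dominance-clos} delivers a relation in $\S_j$: for $j$ even one needs $\downconv(\HH)\subseteq\S_j\in\{\HH,\QH,\CH\}$, which holds since $\downconv(\HH)\subseteq\HH\cap\QH\cap\CH$; for $j$ odd one needs $\upconv(\S_{j\pm1})\subseteq\HH$, which holds since $\S_{j\pm1}\in\{\HH,\QH,\CH\}\subseteq\RCCAH\backslash\Nrcc$ and $\upconv(\RCCAH\backslash\Nrcc)\subseteq\HH=\S_j$. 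Once this bookkeeping is done, the two-slice algebraic stability statement of Lemma~\ref{lem:dominance-stable} covers every adjacent pair that can appear, the refinement theorem applies, and $\S$ is algebraically tractable, completing the proof.
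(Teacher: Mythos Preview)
Your proposal is correct and follows the same approach as the paper: invoke the refinement theorem of~\cite{cohen2017checking}, with the first part supplied by the argument of Proposition~\ref{prop:dominance-reseau-acoherent-satisfiable} (via Lemma~\ref{lem:dominance-stable}) and the second part by checking that $\S$ is a $\conv$-closed Cartesian subclass using Lemma~\ref{lem:dominance-clos}. You simply spell out in more detail the parity bookkeeping for $\upconv$/$\downconv$ that the paper leaves implicit in the phrase ``by Lemma~\ref{lem:dominance-clos} and since $\S$ is a Cartesian subclass''.
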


\begin{proof}
Let $\S$ be a subset of $\TRCCHd$ satisfying $S_{2i-1}=\HH$ and
$\S_{2i}\in\left\{ \HH,\QH,\CH\right\} $ for all $i\in\left\{ 1,\ldots,l\right\} $.
$\S$ satisfies the conditions of the first part of the refinement
theorem (i.e. $\S$ satisfies the conditions of the first implication
; see the proof of Proposition~\ref{prop:dominance-reseau-acoherent-satisfiable}).
$\S$ also satisfies the conditions of the second part (i.e. the conditions
of the second implication). Indeed, on the one hand, $\S$ is a subclass
(since $\S$ is Cartesian and each $\S_{i}$ is a subclass~\cite{renz1999maximal}).
On the other hand, $\S$ is $\conv$-closed (by Lemma~\ref{lem:dominance-clos}
and since $\S$ is a Cartesian subclass). $\S$ is thus algebraically
tractable~(refinement theorem~\cite{cohen2017checking}). 
\end{proof}

\section{Conclusion}

First, we have focused on $\TRCCHv$, the qualitative formalism of
topological temporal sequences describing the evolution of regions
at instants between which there are no intermediary relations (i.e.
at time points which characterize all the qualitative changes). We
have shown that there is no algebraically tractable Cartesian subalgebra
(subclass containing all basic relations) for $\TRCCHv$ when the
length of sequences is longer than $3$. However, we have identified
some tractable subclasses. The price of tractability has been to give
up the relations containing $\ntpp$ not containing $\tpp$ and thus
to give up the basic relation $\ntpp$.

Then, we have formalized $\TRCCHd$, the qualitative formalism of
topological temporal sequences describing the evolution of regions
on a partition of time (i.e. on a contiguous alternation of instants
and open intervals). In this context, we have identified large algebraically
tractable Cartesian subalgebras.

It is possible to identify other algebraically tractable subclasses
for $\TRCCHv$ and $\TRCCHd$. The tractability limit of the subclasses
of these two formalisms should be precisely determined. In particular,
a definitive answer to the question of the existence of polynomial
Cartesian subalgebra for $\TRCCHv$ should be given. Note that the
identification of universes ensuring the completeness of $\TRCCHv$
and of $\TRCCHd$ remains an open problem, on which we are working.

Concerning the applications, $\TRCCHv$ and $\TRCCHd$ can be used
to decide if it is possible to go from a topological scenario $S$
to another $S'$, with at most $m$ qualitative changes, while satisfying
at each instant the constraints of a network $N$ and to determine
one of the corresponding intermediate temporal sequences. This problem
should be useful for spatial planning. When $S$, $S'$, and $N$
correspond to one of the previous algebraically tractable subclasses
(for instance when $S$, $S'$, and $N$ are over $\HH$), the problem
is polynomial. Otherwise, it is possible that using tractable subclasses
still speeds up the resolution of the problem, as in the classic case~\cite{renz2001efficient}.
Note that $\TRCCHd$ is more interesting than $\TRCCHv$ for this
problem since it allows to find a more expressive intermediate sequence
while having larger tractable subclasses.

\bibliographystyle{splncs04}
\bibliography{biblio}
 
\end{document}